\documentclass[conference,letterpaper]{IEEEtran}
\usepackage[utf8]{inputenc} 
\usepackage[T1]{fontenc}  
\usepackage{mathrsfs}
\usepackage[cmex10]{amsmath}
\usepackage{%
algorithm,%
algpseudocode,%
amsfonts,%
amssymb,%
amsthm,%
bbm,%
caption,%
cite,%
comment,%
extarrows,%
float,%
graphicx,%
indentfirst,%
listings,%
paralist,%
pgfplots,%
subcaption,%
tikz,%
todonotes,%
url,%
xcolor,%
xspace,%
makecell,%
booktabs,%
nicefrac,%
microtype,%
wrapfig
}

\usepackage{epstopdf}

\pgfplotsset{plot coordinates/math parser=false}
\usepgflibrary{%
patterns,%
}

\usetikzlibrary{%
arrows,%
automata,%
chains,%
decorations.pathreplacing,%
intersections,%
positioning,%
shapes,%
}

\IEEEoverridecommandlockouts  

\newtheorem{lemma}{Lemma}
\newtheorem{theorem}{Theorem}

\newtheorem{assumption}{Assumption}

\theoremstyle{definition}
\newtheorem{definition}{Definition}

\theoremstyle{remark}

\definecolor{darkgreen}{rgb}{0.0, 0.5, 0.0}

\newcommand{\set}[1]{\left\lbrace#1\right\rbrace}

\newcommand{\E}{\mathbb{E}}

\newcommand{\R}{\mathbb{R}}
\newcommand{\Z}{\mathbb{Z}}

\newcommand{\cD}{\mathcal{D}}

\newcommand{\cG}{\mathcal{G}}

\newcommand{\cM}{\mathcal{M}}

\newcommand{\cO}{\mathcal{O}}

\newcommand{\cV}{\mathcal{V}}

\renewcommand{\ge}{\geqslant}
\renewcommand{\le}{\leqslant}
\newcommand{\iid}{\emph{i.i.d.}\ }

\newcommand{\mix}{\mathrm{mix}}
\newcommand{\Prb}{\mathrm{Pr}}

\definecolor{bleudefrance}{rgb}{0.19, 0.55, 0.91}

\title{\huge RW-LoRA: Communication-Efficient Decentralized LoRA Fine-Tuning via Random Walks}
 
\author{Xingran Chen, Rohit Bhagat, Ghadir Ayache, Rawad Bitar, Yanmin Gong, and Salim El Rouayheb
\IEEEcompsocitemizethanks 	 
{
\IEEEcompsocthanksitem  Xingran Chen is with the Engineering Systems and Design Pillar, Singapore University of Technology and Design, Singapore 487372	(E-mail: xingranc@ieee.org).
\IEEEcompsocthanksitem Ghadir Ayache is with the LinkedIn, New York, NY 10118, USA (E-mail: ayache.ghad@gmail.com).
\IEEEcompsocthanksitem  Rawad Bitar is with the Chair of Communications Engineering, Technical University of Munich, 80333 Munich, Germany	(E-mail: 
rawad.bitar@tum.de).
\IEEEcompsocthanksitem  Yanmin Gong is with School of Engineering Medicine and the Department of Computer Science,  Texas A\&M University, College Station, TX 77843, USA	(E-mail: 
yanmin.gong@tamu.edu).
\IEEEcompsocthanksitem  Rohit Bhagat and Salim El Rouayheb are with Department of Electrical and Computer Engineering,  Rutgers University, Piscataway Township, NJ 08854, USA	(E-mail: \{rb1395, sye8\}@scarletmail.rutgers.edu).

\IEEEcompsocthanksitem This paper has been accepted by IEEE ITW 2026.
}
}

\begin{document}

\maketitle

\begin{abstract}
Parameter-efficient fine-tuning methods such as LoRA have become a standard approach for adapting large foundation models. Adopting fine-tuning to distributed settings faces several challenges. Most existing distributed LoRA methods rely on centralized aggregation, and gossip-based decentralized LoRA requires repeated synchronization among multiple model copies. Both methods incur significant communication overhead and introduce errors due to simultaneous aggregation of multiple model updates.  In this paper, we take a different perspective and propose a random-walk-based LoRA fine-tuning scheme. Instead of maintaining multiple model replicas, a single model token traverses the network and is updated sequentially using local fine-tuning objectives. This design eliminates the need for global synchronization, substantially reduces communication and computation costs, and avoids aggregation errors. We provide rigorous convergence guarantees for non-convex objectives under standard assumptions. Through empirical results on multiple NLP tasks and graph topologies, we show that the proposed method achieves competitive task performance with substantially less communication and computation than gossip-based LoRA. 
\end{abstract}

\section{Introduction}

Foundation models such as GPT-4 \cite{openai2024gpt4technicalreport}, LLaMA \cite{journals/corr/abs-2302-13971}, and BERT \cite{Devlin2019BERTPO} have transformed artificial intelligence, achieving strong performance on tasks ranging from translation to summarization \cite{ghiasvand-etal-2025-decentralized, Bommasani2021OnTO}. To achieve strong performance on domain-specific tasks, these models often need  to be adapted using local or task-specific data. However, their scale, often  ranging from $10^8$ to $10^{11}$ parameters, makes full fine-tuning  computationally expensive, communication-heavy, and prone to overfitting.  Parameter-efficient fine-tuning (PEFT) methods, especially low-rank adaptation  (LoRA), address this challenge by updating only a small number of additional  parameters while keeping the backbone model frozen \cite{hu2022lora}. As a  result, LoRA substantially reduces the computational and communication costs of  fine-tuning.

The benefits of LoRA are especially important in distributed and  resource-constrained applications, such as healthcare, edge personalization, and  enterprise AI, where privacy, bandwidth, and data-sovereignty constraints often prevent raw data from being centralized. In such settings, fine-tuning must be  performed through distributed collaboration across multiple data owners or edge  nodes. Among them, federated LoRA methods  \cite{10.5555/3737916.3738624,bian2025fedalt,Yi2023FedLoRAMP,yan2025federated, guo2025selective,singhal2025fedexlora,FedMomentum} have become a predominant  approach. However, these methods rely on centralized aggregation through a parameter server, which introduces communication and memory overhead and creates  a single point of failure. To remove the need for central coordination, Ghiasvand et~al.\ \cite{ghiasvand-etal-2025-decentralized} recently proposed a gossip-based  decentralized LoRA method. Nevertheless, gossip-based methods require nodes to  exchange model updates with their neighbors repeatedly, which can still incur  substantial communication overhead, especially on dense or bandwidth-limited  networks.

More importantly, LoRA updates are of the form $W=BA$ where the model $W\in\mathbb{R}^{d_1\times d_2}$ is factored into two low-rank matrices $A \in \mathbb{R}^{r \times d_2}$ and $B \in \mathbb{R}^{d_1 \times r}$ with $r \ll \min\{d_1,d_2\}$. In federated and gossip-based learning, the local models $W_i$ of the nodes need to be averaged at every round. To avoid communicating their local $W_i= B_iA_i$'s, the nodes send their factor matrices $A_i$ and $B_i$. Aggregation can happen in two ways, each with its own drawback: (i) averaging the factors introduces a \emph{bilinear mismatch} \cite{wang2024flora,singhal2024fedex,zhang2026fedrotlora,xia2026sefora}, since $\sum_i B_iA_i \neq (\sum_i B_i) (\sum_i A_i)$; and (ii) computing $W_i = B_iA_i$, averaging them and doing SVD to get a low-rank factorization of $W$; introducing additional computation and potential truncation errors.

To address these limitations, we advocate token-based random-walk learning as a scalable alternative. A token, consisting of the current pair of low-rank factors $(A_t,B_t)$, traverses the graph by moving at each iteration from its current node to a randomly selected neighbor, where it is updated using that node's local data \cite{10.5555/3618408.3618786,doi:10.1137/08073038X,10.5555/3327546.3327656}. Random-walk learning differs  from consensus- and gossip-based methods \cite{10.5555/3737916.3738624, ghiasvand-etal-2025-decentralized}: rather than maintaining and synchronizing local models kept at each node, it propagates a single   model through the network. This offers two main advantages. First, at each iteration, communication and computation are devoted to advancing a single model trajectory rather than updating and reconciling local models. Second, the sequential update of the model at each visited node avoids the drawbacks of simultaneously aggregating multiple updates. While their inherently sequential nature may be less advantageous in applications requiring rapid convergence, random-walk methods offer an attractive alternative suitable for applications running in the background or requiring convergence with low communication and computation overhead.

Our contributions are summarized as follows:
\begin{compactenum}[(i)]
\item We propose random-walk-based LoRA fine-tuning as a low communication overhead alternative to gossip-based LoRA~\cite{ghiasvand-etal-2025-decentralized}. This design enables decentralized fine-tuning without a parameter server or synchronous neighbor-wise aggregation. We also provide rigorous convergence guarantees for non-convex objectives (Theorem~\ref{thm:Convergence}).

\item We conduct experiments on several GLUE benchmark tasks using complete and ring communication graphs. The training data are partitioned across nodes in an \iid manner. Compared with the gossip-based decentralized LoRA baseline~\cite{ghiasvand-etal-2025-decentralized}, RW-LoRA maintains comparable accuracies while significantly reducing both communication and computational overhead (Fig.~\ref{fig:overhead}). 

\end{compactenum}

\section{System Model}\label{sec:SystemModel}

We consider a set of nodes collaborating to fine-tune a large model. Nodes hold local iid datasets\footnote{Extending RW-LoRA to non-\iid data is left for future work.} that can be used for fine-tuning, and they are able to communicate only with their neighbors. We represent the communication network as a graph, where each vertex corresponds to a node and each edge indicates that the two corresponding nodes can communicate directly. A token carrying the model parameters is passed from node to node and moves on this graph.

\begin{definition}[Communication topologies and random walks]\label{defn:GenGph}
A communication topology is defined as a finite undirected graph $\cG \triangleq (\cV, E)$ with the set of nodes $\cV = [N]$ and the set of edges $E \subseteq \binom{\cV}{2}$. 
A random walk $\set{v_t}_{t\ge0}:\Omega\to\cV^{\Z_+}$ on this graph can be defined by the common transition probability matrix $P:\cV\to \cM(\cV)$, 
where the probability of transition from node $u$ to node $v$ in one time step at time $t\in\Z_+$, is 
$P_{uv} \triangleq \Prb\left(v_{t+1}= v \,\middle|\, v_t=u\right)$.
\end{definition}

Without loss of generality, we assume that $P_{uv} > 0$ for all nodes $v$ connected to node $u$ in graph $\cG$.  

The stationary distribution describes the long-run fraction of time that the random walk spends at each node. Let $P$ be the transition matrix defined in Definition~\ref{defn:GenGph}. A probability $\pi$ is called a stationary distribution if $\pi = \pi P$.

Let $\pi_0$ denote the initial distribution, and let $\pi_t=P^t\pi_0$ denote the distribution at time step $t$. For $\epsilon>0$, the mixing time $\tau_{\mix}(\epsilon)$ of $\set{v_t}_{t\ge0}$ is defined as \cite[Definition~2]{10.5555/3618408.3618786}:
\begin{align*}
\tau_{\mix}(\epsilon) = \inf\set{t\ge1|\forall \pi_0, d_{\text{TV}}(P^t\pi_0, \pi)\le\epsilon},
\end{align*}
where $d_{\text{TV}}(\cdot,\cdot)$ is the total-variance distance and $\pi_0$ is the initial distribution.

We consider the decentralized stochastic optimization
\begin{align}\label{eq:OriginalF}
\min_{\bf w}\,f({\bf w}) = \min_{\bf w}\,\E_{v\sim\pi}\left[f_v({\bf w})\right],
\end{align}
where ${\bf w}$ denotes the set of parameters to be fine-tuned (details are provided in Section~\ref{subsec:Low-Rank Decomposition} and Section~\ref{sec:Random Walk-based LoRA Scheme}) and $\pi$ is a probability distribution over the node set $\cV$. In our setting, $f$ is the global fine-tuning objective, while $f_v$ is the local fine-tuning objective associated with the data stored at node $v$. Let $\cD_v$ denote the local data distribution specific to user $v$, then the local fine-tuning objective $f_v$ is given in stochastic form:
\begin{align}
f_v({\bf w}) = \E_{\xi_v\sim\cD_v}\left[F_v({\bf w}; \xi_v)\right].
\end{align}

Let $\{v_t\}_{t\ge0}$ be a random walk on $\cV$ with stationary distribution $\pi$. At time $t$, the model token is located at node $v_t$ and is updated using only the local objective at that node \cite{10.5555/3618408.3618786, doi:10.1137/08073038X, 10.5555/3327546.3327656}:
\begin{align}\label{eq:RWupdatingrule}
{\bf w}_{t+1}={\bf w}_t - \eta \nabla f_{v_t}({\bf w}_t),
\end{align}
where $\eta>0$ is the stepsize. \eqref{eq:RWupdatingrule} provides only a general random-walk update rule for random-walk-based learning. Its adaptation to the LoRA setting is presented in Section~\ref{sec:Random Walk-based LoRA Scheme}.



\section{Preliminaries on LoRA}\label{sec:Preliminaries}
\subsection{Low-Rank Adaptation}\label{subsec:Low-Rank Decomposition}

Consider a pre-trained model $\Phi_0$ consisting of multiple layers parameterized by weight matrices. Let $W_0 \in \mathbb{R}^{d_1 \times d_2}$ denote the weight matrix of one layer. In full fine-tuning, one would learn an update matrix $\Delta W \in \mathbb{R}^{d_1 \times d_2}$ and use $W = W_0 + \Delta W$.
To increase the efficiency of fine-tuning, LoRA keeps $W_0$ fixed and restricts the update to a low-rank form \cite{hu2022lora} $\Delta W = BA$, where $A \in \mathbb{R}^{r \times d_2}$, $B \in \mathbb{R}^{d_1 \times r}$, and $r \ll \min\{d_1,d_2\}$. Thus, the adapted weight matrix becomes
\begin{align}\label{eq:LoRA0}
W = W_0 + BA .
\end{align}

During the fine-tuning phase, LoRA optimizes the factors (also called adapters) $A$ and $B$ instead of directly updating $\Delta W$. This reduces the number of trainable parameters from $d_1d_2$ to $r(d_1+d_2)$ and hence reduces the required computation and memory resources \cite{10.5555/3737916.3738624}. Throughout the paper, we refer to $r$ as the LoRA rank, which is typically chosen from $\{2,4,8,16\}$. 

\subsection{Federated LoRA-based Fine-Tuning}\label{subsec:FedDecFine-Tuning}
LoRA has become a popular approach for adapting foundation models \cite{hu2022lora}. To enable efficient fine-tuning across distributed data sources, LoRA has recently been incorporated into federated learning. Representative methods include FLoRA \cite{10.5555/3737916.3738624}, FedALT \cite{bian2025fedalt}, FedLoRA \cite{Yi2023FedLoRAMP}, FRLoRA \cite{yan2025federated}, FedSA-LoRA \cite{guo2025selective}, FedEx-LoRA \cite{singhal2025fedexlora}, and FedMomentum \cite{FedMomentum}. These methods mainly differ in how they aggregate or personalize LoRA adapters across clients. For example, FLoRA, FedALT, and FedLoRA address heterogeneous or personalized LoRA adapters through stacking, personalized components, or global-local knowledge exchange. Other methods, including FRLoRA, FedSA-LoRA, FedEx-LoRA, and FedMomentum, improve aggregation efficiency or stability through residual updates, selective factor sharing, exactness-preserving corrections, or momentum-based aggregation.  Most existing federated LoRA methods still rely on centralized aggregation through a server. As a result, they inherit the communication and coordination bottlenecks of classical federated learning, which become increasingly limiting for large foundation models.

\subsection{Token-based Random-Walk Learning}\label{subsec:TokenRWLearning}

Decentralized learning bypasses the need of a central node coordinating the process. The closest to our work is the recent work of \cite{ghiasvand-etal-2025-decentralized}. The authors studied fully decentralized, gossip-based LoRA fine-tuning algorithm. They established convergence guarantees for non-convex objectives under standard assumptions. In contrast, our work studies a random-walk-based LoRA method that avoids repeated server aggregation and neighbor-wise synchronization. This design reduces both communication and computational overhead.

The literature on random-walk-based learning is broad, and we provide only a brief overview here. Seminal work in this area includes \cite{Johansson2007ASP, Lopes2007IncrementalAS, doi:10.1137/08073038X}, which laid the foundation for optimization methods driven by Markov chain sampling. There are two
directions to design and analyze token algorithms. First, \cite{Johansson2007ASP} studied incremental optimization and subdifferentials methods under Markov chain sampling. Second, \cite{doi:10.1137/08073038X, Lopes2007IncrementalAS} developed improved convergence guarantees for stochastic gradient and mirror-descent methods under Markovian sampling. A common feature of these results is that the convergence rate depends on the mixing time $\tau_{\mathrm{mix}}$ of the underlying Markov chain, often through terms such as
$\cO\left(\frac{\tau_{\mathrm{mix}}}{T}\right)+\cO\left(\sqrt{\frac{\tau_{\mathrm{mix}}}{T}}\right)$. For a more complete account of random-walk-based learning and token algorithms, we refer the reader to \cite{10.5555/3618408.3618786}.
Recently, random-walk-based learning has also been studied for data privacy~\cite{ayache2021private,egger2025source} and robustness to probabilistic and malicious node failures~\cite{egger2024self,CIL,khalesi2026fundamental}.

\section{The RW-LoRA Algorithm}\label{sec:Random Walk-based LoRA Scheme}

In this section, we present the RW-LoRA algorithm, summarized in Algorithm~\ref{alg:rw-lora}.

\begin{algorithm}[htbp]
\caption{RW-LoRA Algorithm}\label{alg:rw-lora}
\begin{algorithmic}[1]
\Require Graph $\mathcal{G}=(\mathcal{V},E)$, transition matrix $P$, stepsize $\eta$, LoRA rank $r$, number of iterations $T$, pre-trained weight matrix $W_0 \in \mathbb{R}^{d_1\times d_2}$
\State Initialize $[A^{(0)}]_{ij} \sim \mathcal{N}(0,\sigma^2)$ and $B^{(0)}={\bf 0}$
\State Initialize the random walk at node $v_0 \in \mathcal{V}$
\For{$t=0,1,\ldots,T-1$, at node $v_t$}
    \State Compute $W^{(t)} = W_0 + B^{(t)}A^{(t)}$.
    \State 
    Update the LoRA factors via \eqref{eq:recursionAlocal} -- \eqref{eq:recursionB}.
    \State Sample the next node according to $v_{t+1} \sim P_{v_t,\cdot}$.
    \State Transfer the token $(A^{(t+1)},B^{(t+1)})$ 
    to node $v_{t+1}$.
\EndFor
\State \Return $A^{(T)}, B^{(T)}$ and $W^{(T)}=W_0+B^{(T)}A^{(T)}$
\end{algorithmic}
\end{algorithm}

As discussed in Section~\ref{subsec:Low-Rank Decomposition}, we fine-tune the model by optimizing the low-rank factors $A$ and $B$. Specifically, the optimization problem in \eqref{eq:OriginalF} can be written as:
\begin{align}\label{eq:LoRA1}
\min_{A, B} f(W) = \min_{A, B} \E_{v\sim\pi}\left[f_v(W)\right],
\end{align}
where $A \in \mathbb{R}^{r \times d_2}$, $B \in \mathbb{R}^{d_1 \times r}$, $f_v:\R^{d_1\times d_2}\to\R$ is the local fine-tuning objective of node $v$, and $\pi$ is a pre-determined target distribution.

For simplicity, we use $\nabla$ to denote the gradient with respect to $W$, i.e., $\nabla_W$, and use $\nabla_A$ and $\nabla_B$ to denote the gradients with respect to $A$ and $B$, respectively. We define 
\begin{align}\label{eq:nablaW}
\nabla \tilde{f}_v(W) \triangleq \nabla F_v(W;\xi_v).
\end{align}

Given a target sampling distribution $\pi$ \big(assuming $\pi_v>0$ for all $v\in\cV$\big), the Metropolis-Hastings algorithm \cite{doi:10.1137/08073038X} provides a principled way to construct a RW  on $\cG$ with a transition matrix $P$ such that $\pi$ is the stationary distribution of the Markov chain defined by $P$.\footnote{The transition matrix $P$ depends on both the target sampling distribution $\pi$ and the graph $\cG$. We re-parameterize it as $P_{\pi,\cG}\in\R_+^{N\times N}$, and simply write $P$ when there is no risk of confusion.} Let $\{v_t\}_{t\ge 0}$ be a random walk on $\cV$ that evolves according to the transition matrix $P$. As mentioned in Section~\ref{sec:SystemModel}, \eqref{eq:RWupdatingrule} provides a general random-walk update rule for vector-valued parameters. Since LoRA fine-tuning optimizes matrix-valued parameters, we extend this update rule to the two LoRA matrices $A$ and $B$: At time $t$, the model token is located at node $v_t$, where the local fine-tuning objective is used to update the LoRA factors for $K$ local steps \cite{ghiasvand-etal-2025-decentralized, hu2022lora}. Specifically, for $k\in\set{0,1,\cdots,K-1}$,
\begin{align}
&A^{(t, k+1)} = A^{(t, k)} - \eta \nabla_A \tilde{f}_{v_t}(W^{(t, k)})\label{eq:recursionAlocal},\\
&B^{(t, k+1)} = B^{(t, k)} - \eta \nabla_B \tilde{f}_{v_t}(W^{(t, k)})\label{eq:recursionBlocal},
\end{align}
After the $K$ local steps, the updated LoRA matrices are given by
\begin{align}
&A^{(t+1)} = A^{(t, K)}\label{eq:recursionA},\\
&B^{(t+1)} = B^{(t, K)}\label{eq:recursionB}.
\end{align}
The local updates are initialized with $A^{(t, 0)} = A^{(t)}$ and $B^{(t, 0)} = B^{(t)}$. The initial LoRA matrices are initialized as $A^{(0)} \in \mathbb{R}^{r \times d_2}$, $[A^{(0)}]_{ij} \overset{\iid}{\sim} \mathcal{N}(0,\sigma^2)$, $B^{(0)}={\bf 0}\in\R^{d_1\times r}$, and $W^{(0)}=W_0+B^{(0)}A^{(0)}$, as adopted in \cite{ghiasvand-etal-2025-decentralized}.

\section{Theoretical Results}\label{sec:TheoreticalResults}

This section establishes the convergence of RW-LoRA in the non-convex setting under Assumptions~\ref{assu:Submultiplicative}--\ref{assu:BoundedPara}. These assumptions are stated and discussed immediately after the theorem.
\begin{theorem}\label{thm:Convergence}
Let Assumptions~\ref{assu:Submultiplicative} $\sim$ \ref{assu:BoundedPara} hold. Let $\epsilon<\frac{1}{2}$, $\pi_{\min}=\min_{v\in\cV}\set{\pi_v}$, $\tau\ge\tau_{\mix}(\pi_{\min}\epsilon)$, $T\ge \tau^2$, $\eta\le\frac{1}{4L\sqrt{T}}$, we obtain\footnote{Here, $\widetilde{\cO}(\cdot)$ hides logarithmic factors in the relevant problem parameters. For example,  $\widetilde{\cO}(1/\sqrt{T})$ may represent terms such as  $\cO(\log T/\sqrt{T})$ or $\cO(\log(cT)/\sqrt{T})$, depending on the context.}:
\begin{align}\label{eq:convergence}
\frac{1}{T}&\sum_{t=1}^{T}\left(\E\left[\|\nabla_{A} f(W^{(t)})\|^2\right]+\E\left[\|\nabla_{B} f(W^{(t)})\|^2\right] \right)\nonumber\\
\le& \widetilde{\cO}\left(\epsilon^2 + \frac{L\tau}{\sqrt{T}}+\frac{L\tau}{T}\right).
\end{align}
\end{theorem}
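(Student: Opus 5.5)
We will follow the standard Markov-chain SGD template of \cite{10.5555/3618408.3618786, doi:10.1137/08073038X}, adapted to the bilinear parameterization $W=W_0+BA$. The assumptions are stated after the theorem, so we assume they supply the following:
\begin{itemize}
\item $L$-smoothness of each $f_v$ in $W$;
\item unbiased stochastic gradients with bounded variance;
\item bounded gradients;
\item boundedness of the iterates $\|A^{(t)}\|,\|B^{(t)}\|\le C$ (Assumption~\ref{assu:BoundedPara});
\item submultiplicativity of the norm (Assumption~\ref{assu:Submultiplicative}).
\end{itemize}

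\textbf{Step 1: smoothness in the factors.} The chain rule gives $\nabla_A f = B^\top \nabla f(W)$ and $\nabla_B f = \nabla f(W) A^\top$. Together with bounded $A,B$ and submultiplicativity, this shows that $(A,B)\mapsto f(W_0+BA)$ has Lipschitz gradients on the bounded region, with constant $\widetilde L=\cO(L)$ (absorbing $C$). It also shows that the factor gradients are bounded. With these two facts the pair $(A,B)$ can be treated as a single vector $x$, and one outer step (the $K$ local steps at node $v_t$) can be written as
\[
x_{t+1}=x_t-\eta K\, g_{v_t}(x_t)+\eta\, e_t ,
\]
where $g_v=(\nabla_A f_v,\nabla_B f_v)$. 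The local drift $e_t$ collects gradient differences within the $K$ inner steps and the stochastic noise. Lipschitzness and bounded gradients give $\E\|e_t\|^2=\cO(\eta^2K^2L^2G^2)+\text{variance}$.

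\textbf{Step 2: descent lemma.} Applying the descent lemma to $f$ gives
\[
f(x_{t+1})\le f(x_t)-\eta K\langle \nabla f(x_t), g_{v_t}(x_t)\rangle+\cO(\eta^2 L K^2 G^2).
\]
The key difficulty is that $v_t$ is not drawn from $\pi$, and $x_t$ depends on $v_t$. To handle this we use the delayed-iterate trick. We write
\[
\langle\nabla f(x_t),g_{v_t}(x_t)\rangle
=\langle\nabla f(x_{t-\tau}),g_{v_t}(x_{t-\tau})\rangle
+\bigl(\text{difference}\bigr),
\]
and bound the difference by $\cO(L\|x_t-x_{t-\tau}\|\,G)\le \cO(L\eta\tau K G^2)$. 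Conditioned on $\mathcal F_{t-\tau}$, the law of $v_t$ is within total variation $\pi_{\min}\epsilon$ of $\pi$ by the choice $\tau\ge\tau_{\mix}(\pi_{\min}\epsilon)$. This gives
\[
\E[g_{v_t}(x_{t-\tau})\mid\mathcal F_{t-\tau}]=\nabla f(x_{t-\tau})+\delta_t,
\qquad \|\delta_t\|\le \epsilon\|\nabla f(x_{t-\tau})\|+\cO(\epsilon G).
\]
Here the factor $\pi_{\min}$ is used to convert the total-variation bound into a relative ratio bound $|\Pr(v_t=v)/\pi_v-1|\le\epsilon$. That ratio bound makes the bias relative to $\nabla f$, and together with $\epsilon<\tfrac12$ the bias can be absorbed into half of the descent term.

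\textbf{Step 3: telescope.} Summing over $t$ (treating the first $\tau$ steps separately, at cost $\cO(\tau G^2/T)$), dividing by $\eta K T/2$, and translating back from $x_{t-\tau}$ to $x_t$ via Step 1 gives
\[
\frac1T\sum_t \E\|\nabla_{(A,B)} f\|^2\le \frac{f(x_0)-f^*}{\eta K T}+\cO(\epsilon^2 G^2)+\cO(L\eta\tau K G^2)+\cO\Bigl(\frac{\tau}{T}\Bigr).
\]
Choosing $\eta\asymp 1/(L\sqrt T)$, consistent with $\eta\le 1/(4L\sqrt T)$, turns the right-hand side into $\epsilon^2+L\tau/\sqrt T+L\tau/T$. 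The logarithmic factors come from $\tau_{\mix}(\pi_{\min}\epsilon)\sim t_{\mix}\log(1/(\pi_{\min}\epsilon))$. The condition $T\ge\tau^2$ ensures $\eta\tau\le 1$, so the delay terms stay controlled.

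\textbf{Main obstacle.} The hardest step is Step 2: controlling the Markovian bias while $x_t$ is correlated with the walk. It is compounded by the bilinear structure, since the factor gradients involve the iterates $A$ and $B$ themselves and so the smoothness constant is only local. We will try to rely entirely on Assumption~\ref{assu:BoundedPara} to globalize the constants. If that assumption only bounds $\|A\|,\|B\|$ in expectation, a fallback is a stopping-time or truncation argument.
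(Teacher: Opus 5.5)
Your proposal is correct in outline, and it takes a genuinely different decomposition from the paper's.

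\textbf{The paper's route.} It never forms a joint smoothness constant for $(A,B)$. It applies the descent lemma twice: first in $B$ with $A^{(t)}$ fixed, then in $A$ with $B^{(t+1)}$ fixed, using only the block constants $La^2$ and $Lb^2$ of Lemma~\ref{lem:smoothfAB}. It runs the delayed-iterate split $E_1+E_2+E_3$ of \cite{Even} only on the $B$-block inner product. The $A$-block term $\E\langle\nabla_A f(B^{(t+1)}A^{(t)}),A^{(t+1)}-A^{(t)}\rangle$ is bounded by plain Young's inequality in \eqref{eq:interA}. It then repeats everything with the blocks in the opposite order and averages. You instead treat $x=(A,B)$ as one variable, use a single descent lemma, and push the whole factor gradient through the mixing argument at once.

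\textbf{What the block split buys.} Along a single block, $f(W_0+BA)$ is $f$ composed with a linear map, so the curvature constants are exactly $La^2$ and $Lb^2$, with no cross term. Your joint constant is therefore not $O(L)$ as you state. The second-order term $\langle\nabla f(W),\Delta B\,\Delta A\rangle$ forces $\widetilde L\lesssim (a^2+ab+b^2)L+c$, where $\|\nabla f(W)\|\le c$ follows from Assumption~\ref{assu:Bounds0} by Jensen. For example, a linear $f$ has $L=0$, yet $\langle G_0,BA\rangle$ is not $0$-smooth. This slip only changes hidden constants.

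\textbf{What your joint treatment buys.} Every error term in your recursion carries a factor $\eta^2$ or $\eta\epsilon^2$, or is confined to the first $\tau$ steps. In the paper's route, the Young bound \eqref{eq:interA} leaves a per-step constant $b^2(\sigma^2+c^2)$ (and $a^2(\sigma^2+c^2)$ in the swapped order) that does not shrink with $\eta$. Summed over $T$ steps and divided by $\eta KT/8$, it contributes $4(a^2+b^2)(\sigma^2+c^2)/(\eta K)$. With $\eta\le 1/(4L\sqrt T)$ this grows at least like $L\sqrt{T}$, but \eqref{eq:term1mixingmain4} records it with an extra factor $1/T$. So your joint treatment is the one that actually closes the argument. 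A repaired block-wise proof would have to run the $A$-block term through the same delayed-iterate argument. Relating $\nabla_A f(B^{(t+1)}A^{(t)})$ back to $\nabla_A f(B^{(t)}A^{(t)})$ then reintroduces exactly the $\|\nabla f\|$-dependent coupling you pay for in $\widetilde L$.

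\textbf{Two loose ends you share with the paper.} First, the bound is meaningful only for $\eta=\Theta(1/(L\sqrt T))$, not for every $\eta\le 1/(4L\sqrt T)$, since the $1/(\eta KT)$ term blows up for small $\eta$; you do choose $\eta$ of this order. Second, Assumption~\ref{assu:BoundedPara} bounds only the outer iterates, while your drift estimate for $e_t$ also needs $\|A^{(t,k)}\|$ and $\|B^{(t,k)}\|$ bounded. The paper's Lemma~\ref{lem:Bounds0} makes the same implicit use.
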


First, Theorem~\ref{thm:Convergence} shows that the proposed RW-LoRA algorithm converges to a stationary point in terms of the following first-order stationarity measure \cite{ghiasvand-etal-2025-decentralized}:
\begin{align*}
\E\left[\|\nabla_A f(W^{(t)})\|\right] + \E\left[\|\nabla_B f(W^{(t)})\|\right].
\end{align*}
This criterion is natural for LoRA fine-tuning because the trainable variables are the low-rank factors $A$ and $B$, rather than the full update matrix $\Delta W$. Hence, stationarity should be evaluated with respect to these actual optimization variables. The complete proofs are provided in Appendix~\ref{App:Convergence}.

Second, Theorem~\ref{thm:Convergence} indicates that the convergence rate depends on the mixing time $\tau$ of the underlying Markov chain. In particular, our bound has the form $$\widetilde{\cO}\left(\epsilon^2 + \frac{L\tau}{\sqrt{T}} + \frac{L\tau}{T}\right).$$ This is broadly consistent with the rates obtained in standard random-walk learning for vector-valued objectives, where the dependence on the number of iterations typically appears as $\cO\left(\frac{\tau_{\mathrm{mix}}}{T}\right) + \cO\left(\sqrt{\frac{\tau_{\mathrm{mix}}}{T}}\right)$ \cite{Johansson2007ASP, Lopes2007IncrementalAS, doi:10.1137/08073038X}. There are, however, two differences. First, our result is stated in $\widetilde{\cO}$ notation, whereas the standard vector-valued setting is often expressed in $\cO$ notation. Second, the coefficient of the
$1/\sqrt{T}$ term scales linearly with the mixing time $\tau$ in our LoRA setting, while it scales as $\sqrt{\tau_{\mathrm{mix}}}$ in the standard vector-valued case. This difference arises from the low-rank matrix factorization in LoRA: the trainable variables are the matrices $A$ and $B$, and the update $\Delta W = BA$ introduces additional coupling between the two factors.

\begin{assumption}[Submultiplicative property]\label{assu:Submultiplicative}
The matrix norm $\|\cdot\|$ used in this paper is submultiplicative, i.e., $\|BA\| \le \|B\|\|A\|$ for any compatible matrices $A$ and $B$. 
\end{assumption}
This assumption is mild and is satisfied by most commonly used matrix norms, such as the spectral norm, Frobenius norm, $1$-norm, and $\infty$-norm.

\begin{assumption}[$L$-smooth, Assumption~4.1 in \cite{ghiasvand-etal-2025-decentralized}]\label{assu:Lsmooth}
We assume that each local fine-tuning objective $f_v$ is $L$-smooth, i.e., for all $W, W'\in\R^{d_1\times d_2}$, we have
$$\|\nabla f_v(W) - \nabla f_v(W')\|\le L\|W-W'\|.$$
\end{assumption}

\begin{assumption}[Bounded gradient, Assumption~4.2 in \cite{ghiasvand-etal-2025-decentralized}]\label{assu:Bounds0}
We assume that the stochastic gradients are unbiased and that their expected squared norm remains uniformly bounded:
\begin{align*}
&\E_{\xi_v\sim\cD_v}\left[\nabla \tilde{f}_v(W)\right]=\nabla f_v(W),\nonumber\\
&\E_{\xi_v\sim\cD_v}\left[\|\nabla \tilde{f}_v(W)\|^2\right]\le c^2,\,\, v\in\cV,
\end{align*}
where $\xi_v$ represents a randomly sampled subset of training data from $v$-th node.
\end{assumption}

Assumption~\ref{assu:Bounds0} is standard in decentralized optimization \cite{ghiasvand-etal-2025-decentralized, 10.5555/3737916.3738624, Yi2023FedLoRAMP, guo2025selective}. It helps bound the magnitude of the update steps, as in the analysis in Lemmas~$7$ and $11$ of \cite{ghiasvand-etal-2025-decentralized}. For notational simplicity, we omit the subscript $\xi_v\sim\cD_v$ when it is clear from the context.

\begin{assumption}[Bounded variance, Assumption~5.1 in \cite{Even}]\label{assu:BoundedVar}
There exists $\sigma^2$ such that for all $v\in\cV$ and all $W\in\R^{d_1\times d_2}$, we have:
$$\E\left[\|\nabla\tilde{f}_v(W)-\nabla f(W)\|^2\right]\le\sigma^2.$$
\end{assumption}
In Assumption~\ref{assu:BoundedVar}, the expectation $\E$ is taken over two sources of randomness:  (i) the randomness in data sampling at node $v$, i.e., $\xi_v\sim\cD_v$, and (ii) the randomness in the global objective, as defined in \eqref{eq:LoRA1}. It is used to control the stochastic noise arising from the random-walk sampling process \cite[Lemmas C.1 \& C.2]{Even}.

Assumptions~\ref{assu:Bounds0} and~\ref{assu:BoundedVar} are both necessary and play distinct roles in our analysis. For vector-valued optimization variables, one of these assumptions is often sufficient, since the other is not needed for the convergence proof. In our setting, however, the optimization variables are the matrix-valued LoRA factors $A$ and $B$, with $\Delta W=BA$. Because $A$ and $B$ are updated separately, dropping either assumption would prevent us from completing the convergence proof.

\begin{assumption}[Assumption~4.3 in \cite{ghiasvand-etal-2025-decentralized}]\label{assu:BoundedPara}
There exists constants $a>0$ and $b>0$ such that: $\|A^{(t)}\|\le a$, $\|B^{(t)}\|\le b$ for all $0\le t\le T$.
\end{assumption}

In Assumption~\ref{assu:BoundedPara}, the constants $a$ and $b$ are chosen uniformly over $t$. These constants may depend on the matrix dimensions, namely $r$, $d_1$, and $d_2$.

\section{Experiments}\label{sec:Experiments}

In this section, we compare the proposed RW-LoRA algorithm with a gossip-based decentralized LoRA baseline~\cite{ghiasvand-etal-2025-decentralized}. The results show that RW-LoRA significantly reduces both communication and computational overhead while avoiding the bilinear mismatch issue. 
\subsection{Experiment Settings}\label{subsec:ExperimentSettings}
\subsubsection{Setup}
We use RoBERTa-base with 125M parameters \cite{liu2019roberta} as the backbone model for all tasks. Following \cite{ghiasvand-etal-2025-decentralized}, we use a default LoRA rank of $r=16$, and additionally conduct an ablation study with $r \in\set{4,8,16,32}$. We use the default \textsc{AdamW} optimizer with learning rate $10^{-3}$. The random-walk fine-tuning process is run over $30$ nodes. At each communication round, the active node performs $K=10$ local training steps with batch size $b=32$ before passing the model token to the next node. For reproducibility, all five runs use a fixed random seed of $42$, except for the random-walk process, which uses seeds $1$, $2$, $3$, $4$, and $5$ across the five runs to generate different trajectories.

\subsubsection{Datasets}
We evaluate the proposed RW-LoRA algorithm on five GLUE datasets, covering two types of tasks:
\begin{compactenum}[(i)]
\item Sentence-pair classification: We fine-tune the base model on MRPC \cite{dolan2005automatically}, QQP \cite{Wang2018GLUEAM}, QNLI \cite{Wang2018GLUEAM, rajpurkar-etal-2016-squad}, and MNLI \cite{Wang2018GLUEAM, Williams2017ABC}. These tasks require the model to determine semantic equivalence, textual entailment, or whether a sentence contains the answer to a given question.
\item Sentiment classification: We fine-tune the base model on SST-2 \cite{socher-etal-2013-recursive}, where the task is to classify each sentence as positive or negative.
\end{compactenum}

\subsubsection{Baseline}
We compare RW-LoRA to the gossip-based decentralized LoRA method introduced in~\cite{ghiasvand-etal-2025-decentralized}.

\subsubsection{Topologies} In our decentralized framework, nodes communicate exclusively along the edges of a fixed communication graph that connects $30$ nodes. We focus on two topologies: ring graphs and complete graphs. These two topologies represent two extremes in network connectivity \cite{PacManAC, CIL}. The target distribution $\pi$ is chosen as the uniform distribution, and the corresponding transition probability matrix is constructed using the Metropolis--Hastings rule~\cite{doi:10.1137/08073038X}.

\begin{table*}[t]
\centering
\vspace*{0.02in}
\setlength{\abovecaptionskip}{2pt}
    \begin{tabular}{c|c|c|c|c|c}
        & MRPC & SST2 & QNLI & MNLI & QQP \\ \hline
        RW (Final Scores) 
        & \textbf{91.32} $\pm$ 0.87 
        & 93.28 $\pm$ 0.54 
        & \textbf{91.35} $\pm$ 0.22 
        & \textbf{84.50} $\pm$ 0.09 
        & \textbf{87.61} $\pm$ 0.40 \\ \hline
        Gossip (Final Scores) 
        & 89.26 & \textbf{93.58} & 90.75 & 83.59 & 86.59 \\ \hline
        RW (Best Scores) 
        & \textbf{91.56} $\pm$ 0.78 
        & 94.03 $\pm$ 0.64 
        & \textbf{91.98} $\pm$ 0.33 
        & \textbf{85.90} $\pm$ 0.63 
        & \textbf{88.77} $\pm$ 0.41 \\ \hline
        Gossip (Best Scores) 
        & 90.30 & \textbf{94.38} & 90.80 & 85.20 & 88.05
    \end{tabular}

    \caption{Accuracy under Complete Graph}
    \label{Table:Complete}

    \begin{tabular}{c|c|c|c|c|c}
        & MRPC & SST2 & QNLI & MNLI & QQP \\ \hline
        RW (Final Scores) 
        & \textbf{90.30} $\pm$ 0.87 
        & 92.96 $\pm$ 0.87 
        & \textbf{90.87} $\pm$ 0.48 
        & \textbf{84.83} $\pm$ 0.40 
        & 82.52 $\pm$ 9.68 \\ \hline
        Gossip (Final Scores) 
        & 83.26 & \textbf{93.12} & 90.33 & 83.72 & \textbf{86.63} \\ \hline
        RW (Best Scores) 
        & \textbf{91.61} $\pm$ 0.51 
        & 93.83 $\pm$ 0.34 
        & \textbf{91.51} $\pm$ 0.17 
        & \textbf{85.65} $\pm$ 1.05 
        & \textbf{87.87} $\pm$ 0.45 \\ \hline
        Gossip (Best Scores) 
        & 88.42 & \textbf{94.00} & 91.08 & 84.82 & 87.58
    \end{tabular}

    \caption{Accuracy under Ring Graph}
    \label{Table:Ring}
\end{table*}

\subsection{Experimental Results}\label{subsec:ExperimentResults}

\paragraph{Task Performance} 
The task performance of random-walk-based LoRA is reported in Tables~\ref{Table:Complete} \& \ref{Table:Ring}.  Table~\ref{Table:Complete} reports results on the complete graph, while Table~\ref{Table:Ring} reports results on the ring topology\footnote{In both tables, the performance metric for $\mathrm{MRPC}$ is F1, whereas the metric for all other datasets is accuracy. For simplicity of presentation, we use the term ``Accuracy'' in the tables to refer to the corresponding task performance metric.}. In both tables, \textit{Final Score} denotes the model performance at the last training round, whereas \textit{Best Score} denotes the highest performance achieved over all training rounds. Both methods are run until convergence, resulting in different convergence horizons: approximately $2500$ time steps for RW-LoRA and $180$ time steps for gossip-based LoRA. Although gossip-based LoRA converges in fewer rounds, each round involves neighbor-to-neighbor synchronization across multiple nodes, leading to approximately $54000$ local updates in total. By contrast, RW-LoRA activates only one node at each time step and requires approximately $25000$ local updates before convergence.

As shown in both tables, RW-LoRA performs comparably to gossip-based LoRA in both final and best accuracies on the $\mathrm{MRPC}$, $\mathrm{QNLI}$, $\mathrm{MNLI}$, and $\mathrm{QQP}$ tasks. On $\mathrm{SST2}$, RW-LoRA achieves final and best accuracies comparable to those of gossip-based LoRA. Overall, these results demonstrate that RW-LoRA consistently matches or exceeds the performance of gossip-based LoRA across all evaluated tasks. More importantly, RW-LoRA achieves comparable performance with substantially lower communication and computation overhead.

\paragraph{Communication and Computation Efficiency} 

Under gossip-based LoRA, each node exchanges updates with its neighbors in every communication round. Thus, the per-node communication cost is $O(\mathrm{degree}\times \mathrm{model\_size})$, here $0.59M$ for the ring graph and $8.555M$ for the complete graph, and the total network-wide cost per round is $O(\mathrm{network\_size}\times \mathrm{degree}\times \mathrm{model\_size})$, here $17.7M$ and $256.65M$ for the ring and complete graph, respectively. In contrast, random-walk-based LoRA involves only one active transmission per round, from the current node to the next node on the walk. Its total communication cost per round is therefore
$O(\mathrm{model\_size})$, here $0.295M$ for both graphs. For a fixed network size $N$ and a fixed number of local updates $K$, the computational cost scales linearly with the communication horizon. Specifically, it is $O(K\times \mathrm{network\_size}\times \mathrm{time\_horizon})$ for gossip-based learning and $K\times \mathrm{time\_horizon}$ for random-walk learning.

Fig.~\ref{fig:overhead} shows the results on the $\mathrm{QNLI}$ dataset. The dataset is partitioned across nodes in an \iid manner. Although we evaluate all methods on five datasets, the accuracy curves exhibit similar trends across datasets; therefore, we report the results for $\mathrm{QNLI}$ only. In Fig.~\ref{fig:overhead}, the $x$-axis denotes the communication overhead, measured by the total number of edge activations\footnote{Here, one edge activation corresponds to one communication event between two neighboring nodes.}, while the $y$-axis represents the accuracy or loss. As shown in Figs.~\ref{fig:communicationring} and \ref{fig:computationring}, RW-LoRA achieves comparable or higher accuracy and converges with significantly lower communication costs than gossip-based LoRA while requiring substantially lower communication costs.

\begin{figure}[thbp]
\centering
\begin{subfigure}{0.241\textwidth}
\centering
\includegraphics[width=\linewidth, height=0.75\linewidth]{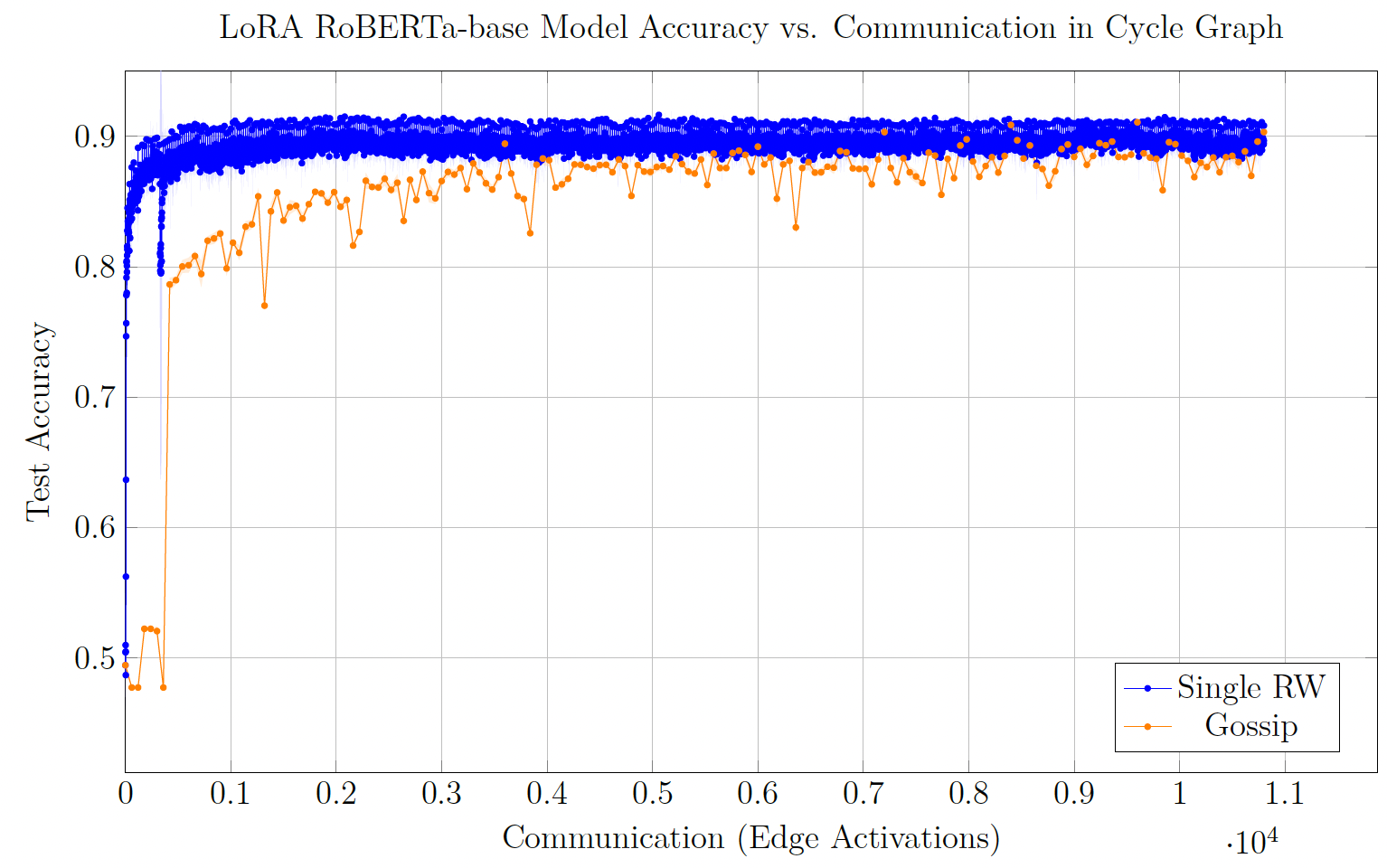}
\caption{}
\label{fig:communicationring}
\end{subfigure}
\begin{subfigure}{0.241\textwidth}
\centering
\includegraphics[width=\linewidth, height=0.75\linewidth]{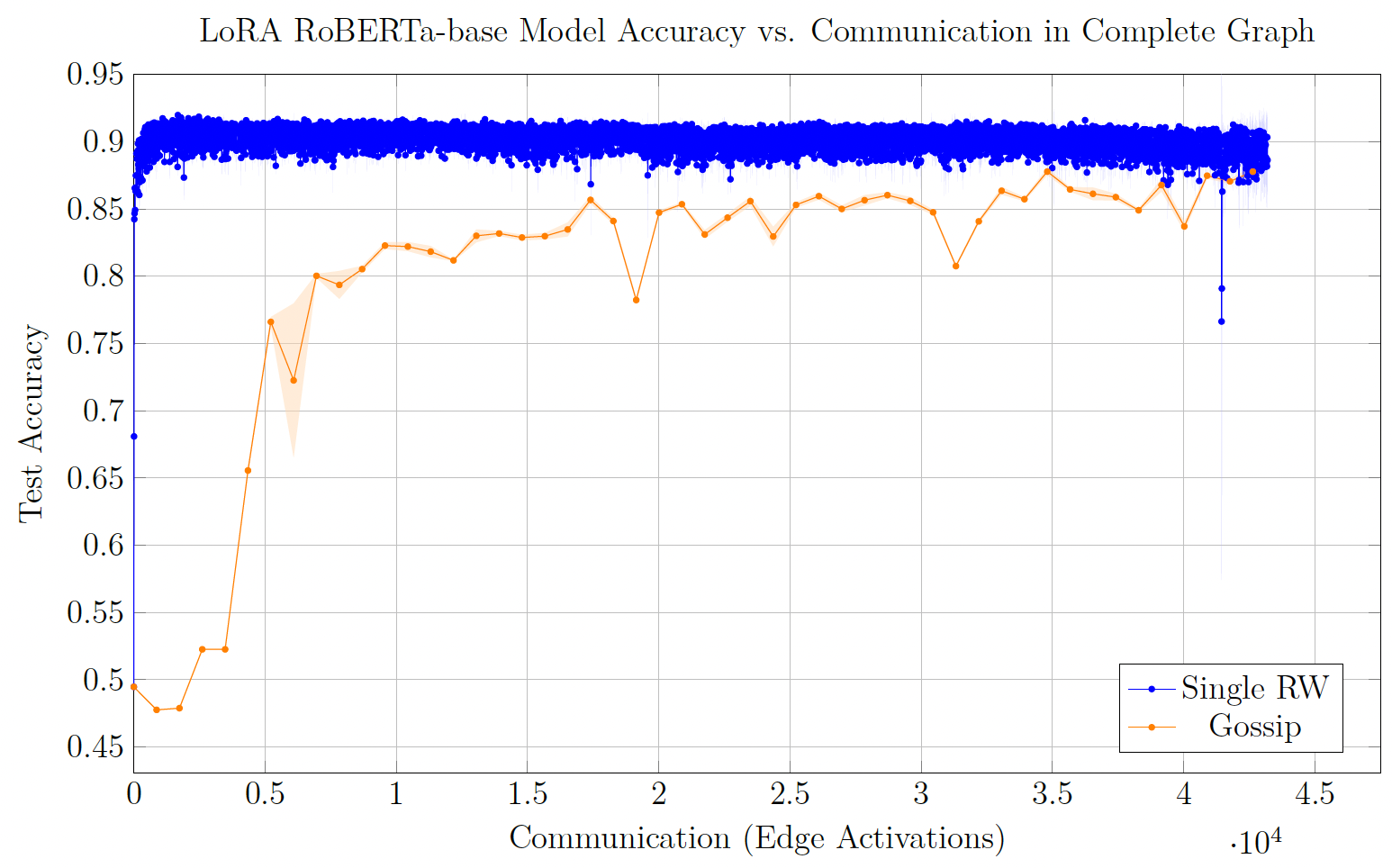}
\caption{}
\label{fig:computationring}
\end{subfigure}
\caption{Accuracy vs. communication overhead in a ring and complete graph.} 
\label{fig:overhead}
\end{figure}

\paragraph{Ablation Study}
\begin{wrapfigure}{r}{0.48\columnwidth}
\centering
\includegraphics[width=0.9\linewidth]{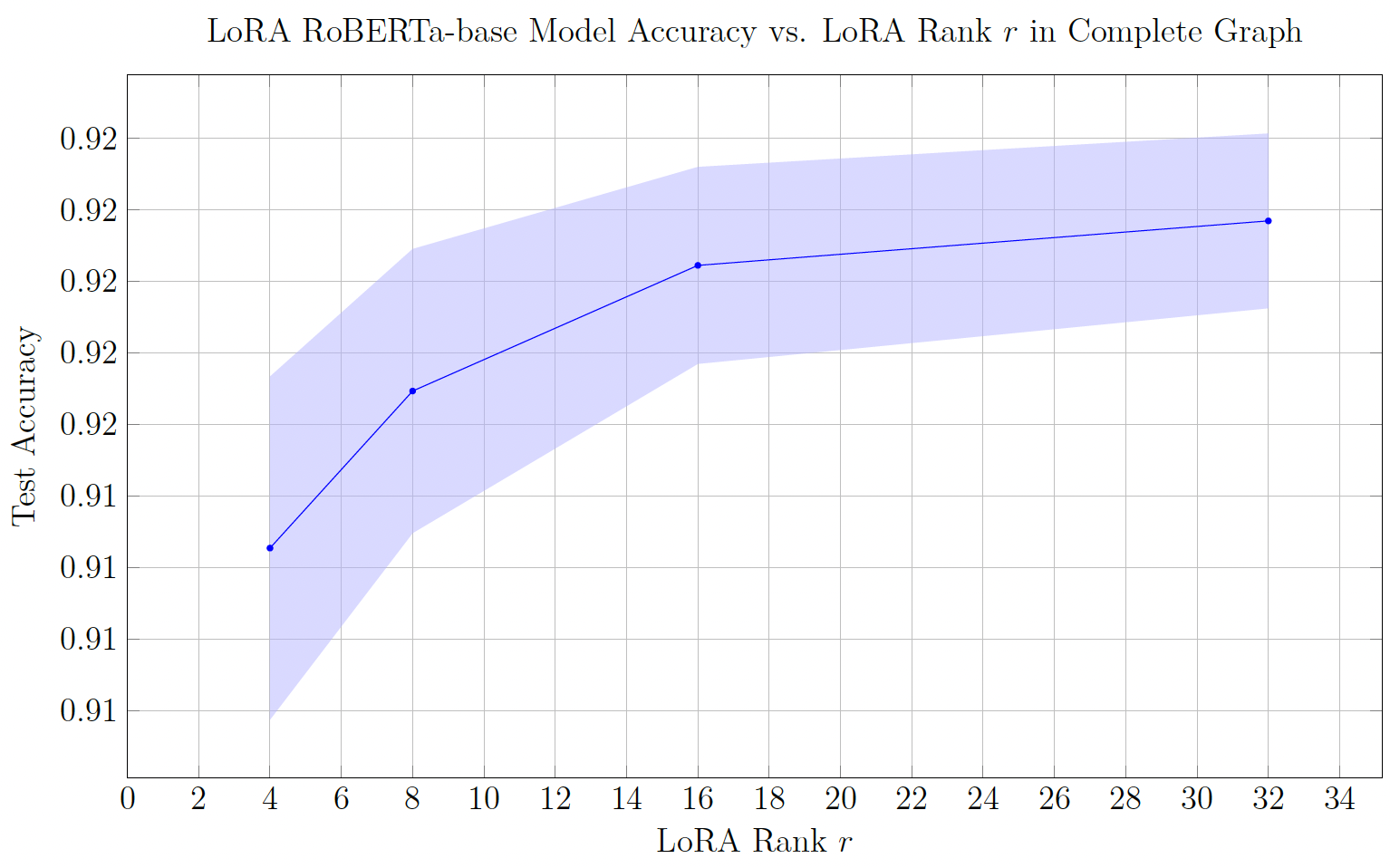}
\caption{Task accuracies under different LoRA ranks.}
\label{fig:rank}
\end{wrapfigure}
We conduct the ablation study for the LoRA rank by considering $r\in\set{4,8,16, 32}$. Fig.~\ref{fig:rank} presents the task accuracies achieved under different LoRA ranks in a complete graph. The results show that the performance remains largely unchanged across the these rank settings. This indicates that RW-LoRA is relatively insensitive to the choice of LoRA rank and can maintain competitive performance even with a small rank.

\section{Conclusion}\label{sec:Conclusion}
In this paper, we studied a decentralized random-walk-based LoRA framework. Unlike existing federated and gossip-based approaches that rely on multiple synchronized model copies, our method maintains a single mobile model that is updated sequentially across the network. We established convergence guarantees for non-convex objectives. Experimental results demonstrate that the proposed method achieves competitive performance while being substantially more resource-efficient than the gossip-based LoRA algorithm. Future work will explore additional information-theoretic aspects of random-walk LoRA algorithms. In particular, it would be interesting to study heterogeneous networks in which nodes use different LoRA ranks.

\bibliographystyle{IEEEtran}
\bibliography{references}

\onecolumn
\appendices

\section{Proof of Theorem~\ref{thm:Convergence}}\label{App:Convergence}
The proof consists of three parts. In the first part, we establish several useful lemmas. In the second part, we decompose the critical term $f(W^{(t+1)}) = f(B^{(t+1)}A^{(t+1)})$. In the third part, we apply a telescoping argument to obtain the desired result.

\subsection{Useful lemmas}
First of all, we provide the following useful lemmas.

\begin{lemma}\label{lem:Bounds0}
The expected squared norms of $\nabla_A \tilde{f}_v(W)$ and $\nabla_B \tilde{f}_v(W)$ are bounded:
\begin{align}
&\E\left[\|\nabla_A \tilde{f}_v(W)\|^2\right]\le b^2c^2,\label{eq:BoundsA}\\
&\E\left[\|\nabla_B \tilde{f}_v(W)\|^2\right]\le a^2c^2.\label{eq:BoundsB}
\end{align}
\end{lemma}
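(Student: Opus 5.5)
The plan is to compute the gradients with respect to the LoRA factors via the chain rule. Then the submultiplicativity of Assumption~\ref{assu:Submultiplicative}, the parameter bounds of Assumption~\ref{assu:BoundedPara}, and the second-moment bound of Assumption~\ref{assu:Bounds0} give the result directly.

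First, since $W = W_0 + BA$ with $W_0$ fixed, a first-order perturbation in $A$ gives $\tilde f_v(W_0+B(A+\Delta)) = \tilde f_v(W) + \langle \nabla \tilde f_v(W), B\Delta\rangle + o(\|\Delta\|)$. Using $\langle G, B\Delta\rangle = \langle B^\top G, \Delta\rangle$, we identify
\begin{align*}
\nabla_A \tilde f_v(W) = B^\top \nabla \tilde f_v(W), \qquad \nabla_B \tilde f_v(W) = \nabla \tilde f_v(W) A^\top,
\end{align*}
where the second identity follows by the symmetric computation in $B$.

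Second, apply Assumption~\ref{assu:Submultiplicative}:
\begin{align*}
\|\nabla_A \tilde f_v(W)\| \le \|B^\top\|\,\|\nabla \tilde f_v(W)\|, \qquad \|\nabla_B \tilde f_v(W)\| \le \|\nabla \tilde f_v(W)\|\,\|A^\top\|.
\end{align*}
For the norms used in the paper, in particular Frobenius and spectral, transposition preserves the norm, so $\|B^\top\| = \|B\| \le b$ and $\|A^\top\| = \|A\| \le a$ by Assumption~\ref{assu:BoundedPara}. Here $A,B$ are the iterates at which the gradient is evaluated, and they are deterministic given the conditioning on the past. Squaring, taking the expectation over $\xi_v\sim\cD_v$, and invoking $\E\|\nabla\tilde f_v(W)\|^2 \le c^2$ from Assumption~\ref{assu:Bounds0} yields \eqref{eq:BoundsA} and \eqref{eq:BoundsB}.

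The main obstacle is conceptual rather than computational. Assumption~\ref{assu:BoundedPara} is stated for the outer iterates $A^{(t)},B^{(t)}$, while the lemma will be applied at inner iterates $A^{(t,k)},B^{(t,k)}$. I would treat the bounds $a,b$ as holding for every iterate at which gradients are evaluated, as implicitly done in \cite{ghiasvand-etal-2025-decentralized}. Alternatively, one can read the lemma as conditional on the current $(A,B)$ satisfying those bounds.

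A second subtlety arises if the norm is not transpose-invariant, as with the $1$-norm or $\infty$-norm. In that case I would state the bounds in terms of $\|B^\top\|$ and $\|A^\top\|$, or restrict to the Frobenius norm, which is the natural setting here.
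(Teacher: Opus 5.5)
Your proposal is correct and follows the paper's proof: the chain rule gives $\nabla_A\tilde f_v(W)=B^\top\nabla\tilde f_v(W)$ and $\nabla_B\tilde f_v(W)=\nabla\tilde f_v(W)A^\top$, and then submultiplicativity with $\|A\|\le a$, $\|B\|\le b$, squaring, and the second-moment bound $c^2$ from Assumption~\ref{assu:Bounds0} finish it. Your two caveats are genuine points the paper's proof uses silently, namely transpose-invariance of the norm and applying the bounds $a,b$ at the inner iterates $A^{(t,k)},B^{(t,k)}$ rather than only at $A^{(t)},B^{(t)}$, so flagging them is appropriate and not a deviation.
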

\begin{proof}
Since $W=BA$, the gradients of $\tilde{f}_v(W)$ with respect to $A$ and $B$ can be obtained by applying the chain rule. Specially, we have:
\begin{align}\label{eq:partialgradients}
&\nabla_A\tilde{f}_v(W)=B^T \nabla\tilde{f}_v(W),\nonumber\\
&\nabla_B\tilde{f}_v(W)=\nabla\tilde{f}_v(W) A^T.
\end{align}
Taking norms on both sides of \eqref{eq:partialgradients}, and using the submultiplicative property in Assumption~\ref{assu:Submultiplicative} together with Assumption~\ref{assu:BoundedPara}, we obtain
\begin{align}
\|\nabla_A\tilde{f}_v(W)\|\le& \|B\|\|\nabla\tilde{f}_v(W)\|\le b\|\nabla\tilde{f}_v(W)\|,\label{eq:nablaAtilde}\\
\|\nabla_B\tilde{f}_v(W)\|\le& \|A\|\|\nabla\tilde{f}_v(W)\|\le a\|\nabla\tilde{f}_v(W)\|.\label{eq:nablaBtilde}
\end{align}
According to Assumption~\ref{assu:Bounds0}, taking square and expectations on both sides of \eqref{eq:nablaAtilde} and \eqref{eq:nablaBtilde}, we obtain:
\begin{align*}
&\E\left[\|\nabla_A \tilde{f}_v(W)\|^2\right]\le b^2c^2,\\
&\E\left[\|\nabla_B \tilde{f}_v(W)\|^2\right]\le a^2c^2.
\end{align*}
\end{proof}

\begin{lemma}\label{lem:BoundDiffW}
The expected squared differences $\E[\|A^{(t+1)} - A^{(t)}\|^2]$, $\E[\|B^{(t+1)} - B^{(t)}\|^2]$, and $\E[\|W^{(t+1)} - W^{(t)}\|^2]$ are upper bounded as follows:
\begin{align}
&\E\left[\|A^{(t+1)}-A^{(t)}\|^2\right]\le\eta^2 K b^2c^2,\label{eq:DisA}\\
&\E\left[\|B^{(t+1)}-B^{(t)}\|^2\right]\le\eta^2 K a^2c^2,\label{eq:DisB}\\
&\E\left[\|W^{(t+1)}-W^{(t)}\|^2\right]\le 3K^2(\eta^2 b^4c^2 + \eta^2 a^4c^2 + K^2 \eta^4 a^2b^2c^4).\label{eq:DisW}
\end{align}
\end{lemma}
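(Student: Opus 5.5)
The plan is to unroll the $K$ local steps of \eqref{eq:recursionAlocal}--\eqref{eq:recursionBlocal} and bound each increment using Lemma~\ref{lem:Bounds0}. By \eqref{eq:recursionA}, the net change of $A$ over one round is a sum of $K$ stochastic gradient steps:
\begin{align*}
A^{(t+1)}-A^{(t)} = -\eta\sum_{k=0}^{K-1}\nabla_A\tilde f_{v_t}(W^{(t,k)}).
\end{align*}
Taking squared norms, applying the triangle inequality and Cauchy--Schwarz, and using \eqref{eq:BoundsA} termwise gives a bound of order $\eta^2 K^2 b^2c^2$. (Assumption~\ref{assu:BoundedPara} is applied to the intermediate iterates $B^{(t,k)}$ as well, as is implicit in Lemma~\ref{lem:Bounds0}.) The stated bound \eqref{eq:DisA} has only a factor $K$, which is what one gets for a single effective step or by exploiting cross-term cancellations. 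I would first try to obtain $K$ by conditioning: write each stochastic gradient as its mean plus a martingale-difference noise term, so that the noise cross terms vanish. If that does not reach the claimed constant, I would accept the $K^2$ version, which is harmless downstream since $K$ is a constant. The bound \eqref{eq:DisB} follows identically from \eqref{eq:BoundsB}, with $a$ in place of $b$.

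For \eqref{eq:DisW}, the key step is the bilinear decomposition. Since $W=W_0+BA$, we have
\begin{align*}
W^{(t+1)}-W^{(t)} = (B^{(t+1)}-B^{(t)})A^{(t)} + B^{(t)}(A^{(t+1)}-A^{(t)}) + (B^{(t+1)}-B^{(t)})(A^{(t+1)}-A^{(t)}).
\end{align*}
Applying $\|x+y+z\|^2\le 3(\|x\|^2+\|y\|^2+\|z\|^2)$ together with Assumption~\ref{assu:Submultiplicative} and Assumption~\ref{assu:BoundedPara} yields
\begin{align*}
\|W^{(t+1)}-W^{(t)}\|^2\le 3\Big(a^2\|\Delta B\|^2 + b^2\|\Delta A\|^2 + \|\Delta B\|^2\|\Delta A\|^2\Big),
\end{align*}
where $\Delta A = A^{(t+1)}-A^{(t)}$ and $\Delta B = B^{(t+1)}-B^{(t)}$. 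The first two terms are handled by the previous paragraph and give $K^2(\eta^2a^4c^2+\eta^2b^4c^2)$-type contributions. This matches \eqref{eq:DisW} up to the pairing of $a$ and $b$: since $a^2\cdot\eta^2Ka^2c^2$ yields $a^4$, the pairing is consistent.

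The main obstacle is the product term $\E[\|\Delta B\|^2\|\Delta A\|^2]$, because the expectation of a product is not the product of expectations. I would avoid a fourth-moment assumption by bounding one factor deterministically. Specifically, I would use the pathwise bound $\|\Delta B\|^2\le K\eta^2\sum_k\|\nabla\tilde f\|^2a^2$, or more simply, where Assumption~\ref{assu:BoundedPara} applies, the crude bound $\|\Delta A\|\le 2a$. If neither is satisfactory, I would instead invoke Cauchy--Schwarz with the almost-sure form of the gradient bound $\|\nabla\tilde f_v\|\le c$, which is the reading under which Lemma~\ref{lem:Bounds0} is typically used. Under that reading both increments are bounded pathwise by $\eta K b c$ and $\eta K a c$. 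The product is then at most $\eta^4K^4a^2b^2c^4$, giving the third term $3K^2\cdot K^2\eta^4a^2b^2c^4$ of \eqref{eq:DisW}.
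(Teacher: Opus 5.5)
Your skeleton is the paper's: unroll the $K$ local steps and apply Cauchy--Schwarz for the first two bounds, then use the three-term bilinear split for the third. The proposal does not prove the statement as written, for two reasons. The first is really a defect of the statement; the second is a missing idea that the paper's proof also lacks.

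\textbf{First two bounds.} The factor $K$ in \eqref{eq:DisA}--\eqref{eq:DisB} cannot be obtained in general. Your fallback to $\eta^2K^2b^2c^2$ (resp.\ $\eta^2K^2a^2c^2$) is the correct resolution, and you should drop the conditioning idea. A martingale decomposition only removes cross terms among the noise parts $\nabla_A\tilde f_{v_t}-\nabla_A f_{v_t}$. The $K$ conditional means are taken at the same node $v_t$ and at nearby iterates, so their sum still has squared norm of order $K^2$.

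Concretely, take $d_1=d_2=r=1$, the noiseless $f_v(W)=cW$, and an initial round with $A^{(t)}=B^{(t)}=b$. Then $A^{(t,k)}=B^{(t,k)}=b(1-\eta c)^k$, so $|A^{(t+1)}-A^{(t)}|=b\bigl(1-(1-\eta c)^K\bigr)\approx\eta Kbc$ when $\eta cK\ll 1$. This violates \eqref{eq:DisA} by a factor of about $K$, while all assumptions hold with $a=b$ because the iterates decay.

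The paper's proof asserts $K$ from the same Cauchy--Schwarz step, which in fact yields $K^2$. Its own later use in \eqref{eq:interA} carries $\eta^2K^2b^2c^2/2$.

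\textbf{Third bound.} You are right that $\E[\|\Delta B\|^2\|\Delta A\|^2]$ cannot be bounded by the product of the two second-moment bounds. The paper's proof does exactly that when it takes expectations of $3\eta^4K^2\bigl(\sum_k\|\nabla_A\tilde f_{v_t}(W^{(t,k)})\|^2\bigr)\bigl(\sum_k\|\nabla_B\tilde f_{v_t}(W^{(t,k)})\|^2\bigr)$. None of your remedies closes this gap under the stated hypotheses:
\begin{itemize}
\item The pathwise bound still leaves a product of two random sums.
\item Using $\|\Delta A\|\le 2a$ adds $12\eta^2K^2a^4c^2$. This is of order $\eta^2$ and is not dominated by the $\eta^4$ third term of \eqref{eq:DisW}, so it gives a valid but different inequality.
\item The almost-sure bound $\|\nabla\tilde f_v\|\le c$ is strictly stronger than Assumption~\ref{assu:Bounds0}.
\end{itemize}

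The missing idea is not to split $B^{(t+1)}=B^{(t)}+\Delta B$ at all. Write
\begin{align*}
W^{(t+1)}-W^{(t)} = B^{(t+1)}\bigl(A^{(t+1)}-A^{(t)}\bigr)+\bigl(B^{(t+1)}-B^{(t)}\bigr)A^{(t)},
\end{align*}
and apply Assumption~\ref{assu:BoundedPara} at time $t+1$. Pathwise this gives $\|W^{(t+1)}-W^{(t)}\|^2\le 2b^2\|\Delta A\|^2+2a^2\|\Delta B\|^2$. With the $K^2$ versions of the first two bounds, $\E\bigl[\|W^{(t+1)}-W^{(t)}\|^2\bigr]\le 2\eta^2K^2c^2(a^4+b^4)$. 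This implies \eqref{eq:DisW} with no product term and no extra assumption. As you note, you still need the parameter bounds at the intermediate iterates $A^{(t,k)},B^{(t,k)}$ inside Lemma~\ref{lem:Bounds0}, which the paper also uses implicitly.
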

\begin{proof}
According to \eqref{eq:recursionAlocal} -- \eqref{eq:recursionB}, we have:
\begin{align*}
&\|A^{(t+1)} - A^{(t)}\|^2 =  \eta^2 \|\sum_{k=0}^{K-1}\nabla_A \tilde{f}_{v_t}(W^{(t,k)})\|^2,\\
&\|B^{(t+1)} - B^{(t)}\|^2 =  \eta^2 \|\sum_{k=0}^{K-1}\nabla_B \tilde{f}_{v_t}(W^{(t, k)})\|^2.
\end{align*}
Taking expectations on both sides of above equalities and Cauchy–Schwarz inequality, and based on \eqref{eq:BoundsA} and \eqref{eq:BoundsB}, we have \eqref{eq:DisA} and \eqref{eq:DisB}.

Next, we calculate $W^{(t+1)}-W^{(t)}$:
\begin{align*}
&W^{(t+1)}-W^{(t)}=B^{(t+1)}A^{(t+1)}-B^{(t)}A^{(t)}\\
&=\left(B^{(t)} - \eta \sum_{k=0}^{K-1}\nabla_B \tilde{f}_{v_t}(W^{(t,k)})\right)\left(A^{(t)} - \eta \sum_{k=0}^{K-1}\nabla_A \tilde{f}_{v_t}(W^{(t,k)})\right)-B^{(t)}A^{(t)}\\
&=- \eta \sum_{k=0}^{K-1}\nabla_A \tilde{f}_{v_t}(W^{(t)})B^{(t)} - \eta A^{(t)} \sum_{k=0}^{K-1} \nabla_B \tilde{f}_{v_t}(W^{(t)})\\
&+\eta^2\sum_{k_1=0}^{K-1}\sum_{k_2=0}^{K-1}\nabla_A \tilde{f}_{v_t}(W^{(t,k_1)})\nabla_B \tilde{f}_{v_t}(W^{(t,k_2)}).
\end{align*}
Taking norms on both sides, by the triangle inequality, we obtain:
\begin{align*}
\|W^{(t+1)}-W^{(t)}\|\le & \eta \|\sum_{k=0}^{K-1}\nabla_A \tilde{f}_{v_t}(W^{(t,k)})B^{(t)}\| + \eta \|A^{(t)} \nabla_B \sum_{k=0}^{K-1}\tilde{f}_{v_t}(W^{(t,k)})\|\\
+&\eta^2\|\sum_{k_1=0}^{K-1}\sum_{k_2=0}^{K-1}\nabla_A \tilde{f}_{v_t}(W^{(t,k_1)})\nabla_B \tilde{f}_{v_t}(W^{(t,k_2)})\|.
\end{align*}
Applying the submultiplicative property and the triangle inequality, we obtain
\begin{align*}
\|W^{(t+1)}-W^{(t)}\|\le & \eta b \sum_{k=0}^{K-1}\|\nabla_A \tilde{f}_{v_t}(W^{(t,k)})\| + \eta a \sum_{k=0}^{K-1}\|\nabla_B \tilde{f}_{v_t}(W^{(t,k)})\|\\
+&\eta^2\left(\sum_{k_1=0}^{K-1}\|\nabla_A \tilde{f}_{v_t}(W^{(t,k_1)})\|\right)\left(\sum_{k_2=0}^{K-1}\|\nabla_B \tilde{f}_{v_t}(W^{(t,k_2)})\|\right).
\end{align*}
For simplicity of notation, we denote
\begin{align*}
X_A =& \eta b \sum_{k=0}^{K-1}\|\nabla_A \tilde{f}_{v_t}(W^{(t,k)})\|,\quad X_B = \eta a \sum_{k=0}^{K-1}\|\nabla_B \tilde{f}_{v_t}(W^{(t,k)})\|\\
X_C=&\eta^2\left(\sum_{k_1=0}^{K-1}\|\nabla_A \tilde{f}_{v_t}(W^{(t,k_1)})\|\right)\left(\sum_{k_2=0}^{K-1}\|\nabla_B \tilde{f}_{v_t}(W^{(t,k_2)})\|\right).
\end{align*}
By Cauchy–Schwarz inequality, it follows that 
\begin{align*}
\|W^{(t+1)}-W^{(t)}\|^2\le 3X_A^2 + 3X_B^2 + 3X_C^2. 
\end{align*}
For $X_A^2$, again, by Cauchy–Schwarz inequality, we have:
\begin{align*}
X_A^2\le \eta^2 b^2\left(\sum_{k=0}^{K-1}1^2\right)\left(\sum_{k=0}^{K-1}\|\nabla_A \tilde{f}_{v_t}(W^{(t,k)})\|^2\right)=\eta^2 b^2 K \left(\sum_{k=0}^{K-1}\|\nabla_A \tilde{f}_{v_t}(W^{(t,k)})\|^2\right).
\end{align*}
Similar processes for $X_B^2$ and $X_C^2$, we have the following inequality:
\begin{align*}
\|W^{(t+1)}-W^{(t)}\|^2\le & 3\eta^2 b^2 K \left(\sum_{k=0}^{K-1}\|\nabla_A \tilde{f}_{v_t}(W^{(t,k)})\|^2\right) + 3\eta^2 a^2 K \left(\sum_{k=0}^{K-1} \|\nabla_B \tilde{f}_{v_t}(W^{(t,k)})\|^2\right)\\
+&3\eta^4 K^2 \left(\sum_{k=0}^{K-1}\|\nabla_A \tilde{f}_{v_t}(W^{(t,k)})\|^2\right)\left(\sum_{k=0}^{K-1}\|\nabla_B \tilde{f}_{v_t}(W^{(t,k)})\|^2\right).
\end{align*}
Taking expectations on both sides of above equalities, based on \eqref{eq:BoundsA} and \eqref{eq:BoundsB}, we obtain:
\begin{align*}
\E\left[\|W^{(t+1)}-W^{(t)}\|^2\right]\le 3K^2(\eta^2 b^4c^2 + \eta^2 a^4c^2 + K^2\eta^4 a^2b^2c^4).
\end{align*}
\end{proof}

\begin{lemma}[Lemma~4.10 in \cite{DecLoRA}]\label{lem:smoothAB}
Under Assumption~\ref{assu:Lsmooth}, each local fine-tuning objective $f_v$ is $Lb^2$-smooth with respect to $A$ when $B$ is fixed, and $La^2$-smooth with respect to $B$ when $A$ is fixed.
\end{lemma}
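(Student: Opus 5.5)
The plan is to view $f_v$ as a function of $A$ alone, $g(A) \triangleq f_v(W_0 + BA)$ with $B$ fixed, and show that $\nabla g$ is $Lb^2$-Lipschitz. The case of $B$ with $A$ fixed is symmetric.

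First, by the chain rule, already used in \eqref{eq:partialgradients} for the stochastic version, we have $\nabla_A f_v(W_0+BA) = B^T \nabla f_v(W_0+BA)$. For two points $A, A'$, set $W = W_0 + BA$ and $W' = W_0 + BA'$. Then
\begin{align*}
\|\nabla_A f_v(W) - \nabla_A f_v(W')\| = \|B^T(\nabla f_v(W) - \nabla f_v(W'))\| \le \|B^T\|\,\|\nabla f_v(W)-\nabla f_v(W')\|,
\end{align*}
using Assumption~\ref{assu:Submultiplicative}.

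Next, Assumption~\ref{assu:Lsmooth} bounds the last factor by $L\|W - W'\| = L\|B(A-A')\| \le L\|B\|\,\|A - A'\|$, again by submultiplicativity. Combining the two steps gives the Lipschitz constant $L\|B^T\|\,\|B\|$. Assumption~\ref{assu:BoundedPara} gives $\|B\| \le b$.

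The one subtle point, and the main obstacle, is bounding $\|B^T\|$ by $b$. This holds for the Frobenius and spectral norms, which are transpose-invariant, but not for the $1$- and $\infty$-norms, which swap under transposition. I would handle it by stating that the norm is taken transpose-invariant (as for Frobenius, the norm implicit in gradient inner products). A second subtlety is that Assumption~\ref{assu:BoundedPara} bounds only the iterates, so the smoothness constant is valid on the region $\{\|B\| \le b\}$. The lemma should be read this way: it is only applied with $B = B^{(t)}$ along the trajectory. With these two conventions the constant is $Lb^2$.

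The $B$-direction is identical. Write $\nabla_B f_v = \nabla f_v(W)A^T$, so that
\begin{align*}
\|(\nabla f_v(W)-\nabla f_v(W'))A^T\| \le L\|(B-B')A\|\,\|A^T\| \le L a^2 \|B-B'\|.
\end{align*}
This shows $f_v$ is $La^2$-smooth in $B$. Finally, I would note that $L$-smoothness in this Lipschitz-gradient sense yields the usual quadratic upper bound $g(A') \le g(A) + \langle \nabla g(A), A'-A\rangle + \frac{Lb^2}{2}\|A'-A\|^2$ by the standard integral argument. That quadratic upper bound is the form used later in the descent step.
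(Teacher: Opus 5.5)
Your argument is correct. It is the standard chain-rule-plus-submultiplicativity proof behind the cited Lemma~4.10 of \cite{DecLoRA}: the paper gives no proof beyond that citation, but it performs your manipulation ($\nabla_A f_v = B^T\nabla f_v$ and $\nabla_B f_v = \nabla f_v A^T$, then submultiplicativity) in Lemmas~\ref{lem:Bounds0} and~\ref{lem:Gradientf}. Your two caveats are genuine points the paper glosses over: it silently uses $\|B^T\|\le b$, which fails for the induced $1$- and $\infty$-norms listed after Assumption~\ref{assu:Submultiplicative}, and $b$ is meaningful only for the fixed factor. One refinement: your closing quadratic upper bound via the integral argument needs a norm that is self-dual under the trace inner product, i.e., Frobenius rather than spectral.
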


\begin{lemma}\label{lem:fsmooth}
Let $f(\cdot)$ be defined in \eqref{eq:LoRA1}. Then, $f$ is $L$-smooth. 
\end{lemma}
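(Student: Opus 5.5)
The claim follows directly from the definition of $f$ in \eqref{eq:LoRA1} as a finite mixture of the local objectives together with Assumption~\ref{assu:Lsmooth}. Since $\cV=[N]$ is finite and $\pi$ is a fixed probability vector (not depending on $W$), we can write
\begin{align*}
f(W)=\E_{v\sim\pi}\left[f_v(W)\right]=\sum_{v\in\cV}\pi_v f_v(W).
\end{align*}
By linearity of differentiation over this finite sum, the gradient is
\begin{align*}
\nabla f(W)=\sum_{v\in\cV}\pi_v\nabla f_v(W).
\end{align*}
Here each $f_v$ is differentiable with Lipschitz gradient by Assumption~\ref{assu:Lsmooth}, so the gradient of the mixture exists and equals the mixture of gradients.

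Next, for arbitrary $W,W'\in\R^{d_1\times d_2}$, we take the difference of gradients and apply the triangle inequality for the norm $\|\cdot\|$. Using $\pi_v\ge 0$, we get
\begin{align*}
\|\nabla f(W)-\nabla f(W')\|=\Big\|\sum_{v\in\cV}\pi_v\big(\nabla f_v(W)-\nabla f_v(W')\big)\Big\|\le\sum_{v\in\cV}\pi_v\|\nabla f_v(W)-\nabla f_v(W')\|.
\end{align*}
We then bound each summand by $L\|W-W'\|$ via Assumption~\ref{assu:Lsmooth}. Since $\sum_{v}\pi_v=1$, the right-hand side collapses to $L\|W-W'\|$, which is the $L$-smoothness of $f$.

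There is no real obstacle here. The only points to state explicitly are that $\pi$ is a genuine probability distribution, fixed and independent of $W$, and that the norm in Assumption~\ref{assu:Lsmooth} is the same norm used for $f$, so the triangle inequality and homogeneity apply. If one prefers the expectation form, the same argument goes through with Jensen's inequality applied to the convex function $\|\cdot\|$ in place of the triangle inequality.
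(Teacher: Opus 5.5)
Your proof is correct and takes essentially the same approach as the paper. The paper applies Jensen's inequality to the norm in expectation form, which is exactly your triangle inequality with nonnegative weights over the finite sum, and then invokes Assumption~\ref{assu:Lsmooth} and $\sum_{v}\pi_v=1$.
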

\begin{proof}
According to \eqref{eq:LoRA1},
\begin{align*}
\nabla f(W) - \nabla f(W') = \E_{v\sim\pi}\left[\nabla f_v(W) - \nabla f_v(W')\right].
\end{align*}
By Jensen's inequality applied to the convex norm function, we have:
\begin{align*}
\| \nabla f(W) - \nabla f(W')\| = \left\|\E_{v\sim\pi}\left[\nabla f_v(W) - \nabla f_v(W')\right]\right\|\le\E_{v\sim\pi}\left\|\nabla f_v(W) - \nabla f_v(W')\right\|.
\end{align*}
Based on Assumption~\ref{assu:Lsmooth}, it follows that
\begin{align*}
\| \nabla f(W) - \nabla f(W')\| \le \E_{v\sim\pi}\left[L\|W-W'\|\right] = L\|W-W'\|.
\end{align*}
\end{proof}
\begin{lemma}\label{lem:smoothfAB}
Let $f(\cdot)$ be defined in \eqref{eq:LoRA1}. Then, $f$ is $Lb^2$-smooth with respect to $A$ when $B$ is fixed, and $La^2$-smooth with respect to $B$ when $A$ is fixed.
\end{lemma}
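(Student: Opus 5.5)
The plan is to prove Lemma~\ref{lem:smoothfAB} in the same way as Lemma~\ref{lem:fsmooth}: pass the expectation over $v\sim\pi$ through the gradient and use Jensen's inequality. The needed per-node smoothness in each factor is exactly Lemma~\ref{lem:smoothAB}. For completeness, I would also check the $A$-block directly from the chain rule formulas in \eqref{eq:partialgradients}.

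Fix $B$ with $\|B\|\le b$ (Assumption~\ref{assu:BoundedPara}), and write $W=W_0+BA$ and $W'=W_0+BA'$. Since the expectation in \eqref{eq:LoRA1} is a finite sum over $\cV$, the chain rule gives
\begin{align*}
\nabla_A f(W)-\nabla_A f(W') = \E_{v\sim\pi}\left[\nabla_A f_v(W)-\nabla_A f_v(W')\right].
\end{align*}
Applying Jensen's inequality to the convex norm yields
\begin{align*}
\|\nabla_A f(W)-\nabla_A f(W')\|\le \E_{v\sim\pi}\left\|\nabla_A f_v(W)-\nabla_A f_v(W')\right\|.
\end{align*}
By Lemma~\ref{lem:smoothAB}, each term on the right is at most $Lb^2\|A-A'\|$. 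Because $\pi$ is a probability distribution, the expectation of this constant bound is again $Lb^2\|A-A'\|$.

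The same bound can be seen directly. Lemma~\ref{lem:fsmooth} gives $\|\nabla f(W)-\nabla f(W')\|\le L\|W-W'\|$. We have $\nabla_A f = B^T\nabla f$ and $W-W'=B(A-A')$. Using submultiplicativity (Assumption~\ref{assu:Submultiplicative}) together with $\|B^T\|\le b$, we get
\begin{align*}
\|B^T(\nabla f(W)-\nabla f(W'))\|\le bL\|B(A-A')\|\le Lb^2\|A-A'\|.
\end{align*}

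The $B$-block is symmetric. Fix $A$ with $\|A\|\le a$ and use $\nabla_B f=\nabla f\,A^T$ and $W-W'=(B-B')A$, which gives the constant $La^2$.

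The only delicate point is that $\|B^T\|=\|B\|$ is needed, and this does not hold for every submultiplicative norm. For example, it fails for the $1$-norm versus the $\infty$-norm. I would therefore either restrict to transpose-invariant norms such as the spectral or Frobenius norm, or simply rely on Lemma~\ref{lem:smoothAB} for the per-node constant, in which case only the averaging step is new. Everything else is routine.
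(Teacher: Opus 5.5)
Your proposal is correct and matches the paper, whose entire proof is the remark that the claim follows from Lemma~\ref{lem:smoothAB} and Lemma~\ref{lem:fsmooth}; your two routes (Jensen-averaging the per-node block smoothness, or running the chain-rule argument on the $L$-smooth average $f$) are the two natural ways to unpack that remark. Your caveat that the direct route needs $\|B^T\|=\|B\|$ is well taken, since the paper already relies on this transpose-invariance without comment in Lemma~\ref{lem:Bounds0} and Lemma~\ref{lem:Gradientf}.
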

\begin{proof}
The proof directly follows from Lemma~\ref{lem:smoothAB} and Lemma~\ref{lem:fsmooth}.
\end{proof}
Let $\pi_0$ denote the initial distribution, and let $\pi_t=P^t\pi_0$ denote the distribution at time step $t$. Let $\pi_{\min}=\min_{u\in\cV}\pi_u$.
\begin{lemma}\label{lem:randomgradient}
For $t\ge0$ and if $v_t\sim\pi_t$ for $d_{\text{TV}}(\pi_t, \pi)\le\frac{\pi_{\min}}{2}$, we have
\begin{align*}
\E\left[\|\nabla \tilde{f}_{v_t}(W)\|^2\right]\le 3\sigma^2 + 2\E\left[\|\nabla f(W)\|^2\right]
\end{align*}
\end{lemma}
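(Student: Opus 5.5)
Fix $W$; the only randomness is the node $v_t\sim\pi_t$ and the sample $\xi_{v_t}$. The plan is to compare the law $\pi_t$ with $\pi$ node by node, and then use Assumption~\ref{assu:BoundedVar} to pass from the noisy local gradient to the global one.

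\emph{Step 1 (change of measure).} Since $d_{\text{TV}}(\pi_t,\pi)\le \pi_{\min}/2$, every node satisfies $|\pi_t(u)-\pi(u)|\le 2 d_{\text{TV}}(\pi_t,\pi)\le \pi_{\min}\le \pi(u)$. Under the half-$\ell_1$ convention, $|\pi_t(u)-\pi(u)|\le d_{\text{TV}}\le \pi_{\min}/2$, which gives the stronger $\pi_t(u)\le \tfrac32\pi(u)$. Either way we get a pointwise density bound $\pi_t(u)\le \kappa\,\pi(u)$ with $\kappa\in\{\tfrac32,2\}$. Consequently, for any nonnegative function $g$ on $\cV$,
\begin{align*}
\E_{v\sim\pi_t}[g(v)]\le \kappa\,\E_{v\sim\pi}[g(v)].
\end{align*}

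\emph{Step 2 (decomposition).} Write $\nabla\tilde f_{v}(W)=\big(\nabla\tilde f_v(W)-\nabla f(W)\big)+\nabla f(W)$. Applying $\|x+y\|^2\le 2\|x\|^2+2\|y\|^2$ gives
\begin{align*}
\E\|\nabla\tilde f_{v_t}(W)\|^2\le 2\,\E\|\nabla\tilde f_{v_t}(W)-\nabla f(W)\|^2+2\|\nabla f(W)\|^2 .
\end{align*}
Take $g(u)=\E_{\xi_u}\|\nabla\tilde f_u(W)-\nabla f(W)\|^2$ and apply Step 1. Then Assumption~\ref{assu:BoundedVar}, read with the outer expectation over $v\sim\pi$ as the text explains, bounds the first term by $2\kappa\sigma^2$. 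With $\kappa=\tfrac32$ this equals exactly $3\sigma^2$, and together with $2\|\nabla f(W)\|^2$ it yields the claim. The expectation $\E[\|\nabla f(W)\|^2]$ in the statement is then just $\|\nabla f(W)\|^2$ for deterministic $W$, or its average if $W$ is random but independent of $v_t$ and $\xi_{v_t}$.

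\emph{Main obstacle.} The delicate step is matching the constant $3$. This requires the half-$\ell_1$ definition of total variation, so that $\pi_t\le \tfrac32\pi$ pointwise. It also requires interpreting Assumption~\ref{assu:BoundedVar} as a variance bound averaged under $\pi$ rather than pointwise in $v$. If the assumption is instead pointwise in $v$, Step 1 becomes unnecessary and the bound $2\sigma^2+2\|\nabla f\|^2$ follows directly, which is even stronger. I would present the $\pi$-averaged version, since it is the one that uses the mixing hypothesis. When $W$ is itself random (e.g. $W^{(t)}$), I would condition on the past, treating $W$ as fixed given $\mathcal F_{t-\tau}$, and apply the same argument to the conditional law of $v_t$.
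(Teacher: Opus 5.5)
Your proposal is correct and follows the paper's proof: the TV hypothesis gives $\Pr(v_t=v)\le\frac{3}{2}\pi_v$, and the split $\|x+y\|^2\le 2\|x\|^2+2\|y\|^2$ around $\nabla f(W)$ combined with Assumption~\ref{assu:BoundedVar} yields $3\sigma^2+2\E\|\nabla f(W)\|^2$. Your explicit $\pi$-averaged reading of the variance assumption is what makes the paper's last step $2\sum_v\Pr(v_t=v)\sigma^2=3\sigma^2$ coherent; under the pointwise reading that sum is just $2\sigma^2$. Your worry about the TV convention is unnecessary: $|\pi_t(u)-\pi(u)|\le d_{\text{TV}}$ under the standard half-$\ell_1$ normalization and is at most half of it under the unnormalized one, so the constant $3$ holds either way and $\kappa=2$ never actually arises.
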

\begin{proof}
Since $d_{\text{TV}}(\pi_t,\pi)\le\frac{\pi_{\min}}{2}$, then for any $v\in\cV$, we have
\begin{align*}
\Pr(v_t=v)\le \pi_v+\pi_v/2=\frac{3\pi_v}{2}.
\end{align*}
By the triangle inequality, Cauchy–Schwarz inequality, and Assumption~\ref{assu:BoundedVar}, it follows that
\begin{align*}
\E\left[\|\nabla \tilde{f}_{v_t}(W)\|^2\right]\le& 2\E\left[\|\nabla \tilde{f}_{v_t}(W)-\nabla f(W)\|^2\right]+2\E\left[\|\nabla f(W)\|^2\right]\\
\le&2\sum_{v\in\cV}\Pr(v_t=v)\sigma^2+2\E\left[\|\nabla f(W)\|^2\right]\\
=&3\sigma^2+2\E\left[\|\nabla f(W)\|^2\right].
\end{align*}
\end{proof}

\begin{lemma}\label{lem:Gradientf}
The expected squared norms of $\nabla f(W)$, $\nabla_A f(W)$ and $\nabla_B f(W)$ are bounded by:
\begin{align}
&\E\left[\|\nabla f(W)\|^2\right]\le2\sigma^2+2c^2,\label{eq:BoundsGW}\\
&\E\left[\|\nabla_A f(W)\|^2\right]\le 2b^2(\sigma^2+c^2),\label{eq:BoundsGA}\\
&\E\left[\|\nabla_B f(W)\|^2\right]\le 2a^2(\sigma^2+c^2).\label{eq:BoundsGB}
\end{align}
\end{lemma}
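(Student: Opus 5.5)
The plan is to bound $\|\nabla f(W)\|^2$ first and then transfer the bound to the factor gradients through the chain rule, exactly as in the proof of Lemma~\ref{lem:Bounds0}.

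\emph{Bound on $\nabla f(W)$.} Write $\nabla f(W)$ as a stochastic gradient minus its deviation. Fix any node $v$ and any sample $\xi_v$, and split
\begin{align*}
\nabla f(W) = \nabla\tilde{f}_v(W) - \left(\nabla\tilde{f}_v(W)-\nabla f(W)\right).
\end{align*}
Apply $\|x+y\|^2\le 2\|x\|^2+2\|y\|^2$ and take expectations. The first term is at most $2c^2$ by Assumption~\ref{assu:Bounds0}, and the second is at most $2\sigma^2$ by Assumption~\ref{assu:BoundedVar}. This gives \eqref{eq:BoundsGW}. The expectation here is over the data sample $\xi_v$, the node draw $v$ (as in Assumption~\ref{assu:BoundedVar}), and any randomness in $W$. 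Both assumptions hold uniformly in $W$ and $v$, so the bound survives averaging over $v\sim\pi$ and over the law of $W$ by the tower property.

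\emph{Bounds on $\nabla_A f(W)$ and $\nabla_B f(W)$.} Use the same chain-rule identities as in \eqref{eq:partialgradients}, now applied to $f$ instead of $\tilde f_v$: $\nabla_A f(W)=B^T\nabla f(W)$ and $\nabla_B f(W)=\nabla f(W)A^T$. By Assumption~\ref{assu:Submultiplicative} together with Assumption~\ref{assu:BoundedPara}, $\|\nabla_A f(W)\|\le b\|\nabla f(W)\|$ and $\|\nabla_B f(W)\|\le a\|\nabla f(W)\|$. Squaring, taking expectations, and inserting \eqref{eq:BoundsGW} yields \eqref{eq:BoundsGA} and \eqref{eq:BoundsGB}.

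\emph{Main subtlety.} The delicate step is the first split. Assumption~\ref{assu:BoundedVar} mixes data randomness and node randomness, so I must make sure the node index used for $\nabla\tilde f_v$ is the same one over which Assumption~\ref{assu:BoundedVar} averages. Because both Assumption~\ref{assu:Bounds0} and Assumption~\ref{assu:BoundedVar} hold for every fixed $v$ and $W$, I will apply them pointwise and only then average. This sidesteps the issue.

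A secondary point is that $\|B^T\|\le b$ relies on the norm being transpose-invariant, which holds for the spectral and Frobenius norms. This is implicitly assumed already in Lemma~\ref{lem:Bounds0}, so I will invoke it the same way.
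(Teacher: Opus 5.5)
Your proposal is correct and follows the paper's proof essentially verbatim: you use the split $\E\|\nabla f(W)\|^2\le 2\E\|\nabla\tilde f_v(W)-\nabla f(W)\|^2+2\E\|\nabla\tilde f_v(W)\|^2$ with Assumptions~\ref{assu:Bounds0} and~\ref{assu:BoundedVar}, and then the chain-rule identities $\nabla_A f(W)=B^T\nabla f(W)$, $\nabla_B f(W)=\nabla f(W)A^T$ together with submultiplicativity and Assumption~\ref{assu:BoundedPara}. Your caveat is also a fair one that the paper leaves implicit: $\|B^T\|\le b$ and $\|A^T\|\le a$ require a transpose-invariant norm, which holds for the Frobenius and spectral norms but not for the induced $1$- and $\infty$-norms listed after Assumption~\ref{assu:Submultiplicative}.
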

\begin{proof}
By the triangle inequality and Cauchy–Schwarz inequality, we have
\begin{align*}
\E\left[\|\nabla f(W)\|^2\right]\le &2\E\left[\|\nabla\tilde{f}_v(W)-\nabla f(W)\|^2\right]+ 2\E\left[\|\nabla\tilde{f}_v(W)\|^2\right]
\end{align*}
According to Assumption~\ref{assu:Bounds0} and Assumption~\ref{assu:BoundedVar}, we have
\begin{align*}
\E\left[\|\nabla f(W)\|^2\right]\le2\sigma^2+2c^2.
\end{align*}

Note that $W=BA$, we have:
\begin{align}\label{eq:GradientABW}
\nabla_A f(W)=B^T \nabla f(W),\,\,\nabla_B f(W)=\nabla f(W) A^T.
\end{align}
Applying the submultiplicative property and taking expectations on \eqref{eq:GradientABW}, we obtain:
\begin{align*}
&\E\left[\|\nabla_A f(W)\|^2\right]\le 2b^2(\sigma^2+c^2)\\
&\E\left[\|\nabla_B f(W)\|^2\right]\le 2a^2(\sigma^2+c^2).
\end{align*}

\subsection{Decomposition of $\E\left[f(B^{(t+1)}A^{(t+1)})\right]$}
Since $f$ is smooth, by the Descent Lemma\footnote{The descent lemma is a standard result for $L$-smooth differentiable functions. If $f$ is $L$-smooth, then for any $x,y$, 
\begin{align*}
f(y)\le f(x)+\langle \nabla f(x),y-x\rangle+\frac{L}{2}\|y-x\|^2.
\end{align*}}, we have: 
\begin{align}\label{eq:DLA}
\E\left[f(B^{(t+1)}A^{(t+1)})\right]\le& \E\left[f(B^{(t+1)}A^{(t)})\right] + \E\left\langle\nabla_A f(B^{(t+1)}A^{(t)}), A^{(t+1)}-A^{(t)} \right\rangle\nonumber \\
+&\frac{Lb^2}{2}\E\left[\|A^{(t+1)} - A^{(t)}\|^2\right].
\end{align}
In \eqref{eq:DLA}, the expectation $\E$ is taken over the target sampling $\pi$ and the randomness from the position of the RW at time $t$, i.e., $v_t$.
Again, by the Descent Lemma, we further have: 
\begin{align}\label{eq:DLB}
\E\left[f(B^{(t+1)}A^{(t)})\right]\le&\E\left[f(B^{(t)}A^{(t)})\right] + \E\left\langle\nabla_B f(B^{(t)}A^{(t)}), B^{(t+1)}-B^{(t)} \right\rangle\nonumber\\
+&\frac{La^2}{2}\E\left[\|B^{(t+1)} - B^{(t)}\|^2\right].
\end{align}
Combine \eqref{eq:DLA} and \eqref{eq:DLB}, we obtain:
\begin{align}\label{eq:DLAB}
\E\left[f(B^{(t+1)}A^{(t+1)})\right]\le&\E\left[f(B^{(t)}A^{(t)})\right] +\E\left\langle\nabla_A f(B^{(t+1)}A^{(t)}), A^{(t+1)}-A^{(t)} \right\rangle\nonumber\\
+&\E\left\langle\nabla_B f(B^{(t)}A^{(t)}), B^{(t+1)}-B^{(t)} \right\rangle +\frac{Lb^2}{2}\E\left[\|A^{(t+1)} - A^{(t)}\|^2\right]\nonumber\\
+&\frac{La^2}{2}\E\left[\|B^{(t+1)} - B^{(t)}\|^2\right].
\end{align}
By the Cauchy–Schwarz inequality followed by Young's inequality, we have
\begin{align*}
\E\left\langle\nabla_A f(B^{(t+1)}A^{(t)}), A^{(t+1)}-A^{(t)} \right\rangle\le& \frac{1}{2}\E\left[\|\nabla_A f(B^{(t+1)}A^{(t)})\|^2\right]\\
+&\frac{1}{2}\E\left[\|A^{(t+1)}-A^{(t)}\|^2\right].
\end{align*}
From \eqref{eq:DisA} in Lemma~\ref{lem:BoundDiffW} and \eqref{eq:BoundsGA} in Lemma~\ref{lem:Gradientf}, 
\begin{align}\label{eq:interA}
\E\left\langle\nabla_A f(B^{(t+1)}A^{(t)}), A^{(t+1)}-A^{(t)} \right\rangle \le b^2(\sigma^2+c^2)+\eta^2 K^2 \frac{b^2c^2}{2}.
\end{align}
According to the update rule \eqref{eq:recursionB}, we have 
\begin{align}\label{eq:interB1}
\E\left\langle\nabla_B f(B^{(t)}A^{(t)}), B^{(t+1)}-B^{(t)} \right\rangle &=\E\left\langle\nabla_B f(B^{(t)}A^{(t)}), -\eta \sum_{k=0}^{K-1}\nabla_B \tilde{f}_{v_t}(W^{(t,k)}) \right\rangle\nonumber\\
&=-\eta\E\left\langle\nabla_B f(B^{(t)}A^{(t)}), \sum_{k=0}^{K-1} \nabla_B \tilde{f}_{v_t}(W^{(t,k)}) \right\rangle\nonumber\\
&=\sum_{k=0}^{K-1}\E\left[-\eta\left\langle\nabla_B f(W^{(t)}), \nabla_B \tilde{f}_{v_t}(W^{(t,k)}) \right\rangle\right].
\end{align}
Let $\epsilon<\frac{1}{2}$, $\tau\ge\tau_{\mix}(\pi_{\min}\epsilon)$, and $t\ge\tau$, for any $k\in\set{0,1,\cdots,K-1}$, from the first equality in \cite[Appendix~C.1]{Even}, we have:
\begin{align*}
\E\left[-\eta\left\langle\nabla_B \tilde{f}_{v_t}(W^{(t,k)}), \nabla_B f(W^{(t)}) \right\rangle\right]&=\E\left[-\eta\left\langle\nabla_B \tilde{f}_{v_t}(W^{(t-\tau)}), \nabla_B f(W^{(t-\tau)}) \right\rangle\right]\\
&+\E\left[-\eta\left\langle\nabla_B \tilde{f}_{v_t}(W^{(t,k)}), \nabla_B f(W^{(t)}) - \nabla_B f(W^{(t-\tau)}) \right\rangle\right]\\
&+\E\left[-\eta\left\langle\nabla_B \tilde{f}(W^{(t,k)}) - \nabla_B \tilde{f}(W^{(t-\tau)}), \nabla_B f(W^{(t-\tau)}) \right\rangle\right]\\
&\triangleq E_1+E_2+E_3.
\end{align*}
By the the third inequality in \cite[Appendix~C.1]{Even},  we have:
\begin{align}\label{eq:term1mixingtime1}
E_1\le -\frac{\eta}{4}\E\left[\|\nabla_B f(W^{(t-\tau)})\|^2\right]+\eta\epsilon^2\sigma^2.
\end{align}
By utilizing Assumption~\ref{assu:Lsmooth} and the forth inequality in \cite[page~18]{Even}, we have:
\begin{align}\label{eq:term1mixingtime2}
E_2\le \frac{\eta^2L}{2}\tau\E\left[\|\nabla_B \tilde{f}_{v_t}(W^{(t,k)})\|^2\right]+\frac{\eta^2L}{2}\sum_{s=t-\tau}^{t-1}\sum_{k'=0}^{K-1}\E\left[\|\nabla_B \tilde{f}_{v_s}(W^{(s, k')})\|^2\right].
\end{align}
Similarly, by utilizing Assumption~\ref{assu:Lsmooth} and the sixth inequality in \cite[page~18]{Even}, we have:
\begin{align}\label{eq:term1mixingtime3}
E_3\le&\frac{\eta^2L}{2}\tau\E\left[\|\nabla_B f(W^{(t-\tau)})\|^2\right] +\frac{\eta^2L}{2}\sum_{s=t-\tau}^{t-2}\sum_{k'=0}^{K-1}\E\left[\|\nabla_B \tilde{f}_{v_s}(W^{(s, k)})\|^2\right]\nonumber\\
+& \frac{\eta^2L}{2}\sum_{k'=0}^{k-1}\E\left[\|\nabla_B \tilde{f}_{v_{t-1}}(W^{(t-1, k')})\|^2\right]\nonumber\\
\le&\frac{\eta^2L}{2}\tau\E\left[\|\nabla_B f(W^{(t-\tau)})\|^2\right]+\frac{\eta^2L}{2}\sum_{s=t-\tau}^{t-1}\sum_{k'=0}^{K-1}\E\left[\|\nabla_B \tilde{f}_{v_s}(W^{(s, k)})\|^2\right].
\end{align}
Based on Lemma~\ref{lem:BoundDiffW}, Lemma~\ref{lem:Gradientf}, and \eqref{eq:interA} -- \eqref{eq:term1mixingtime3}, we have:
\begin{align}\label{eq:term1mixingmain1}
\E\left[f(B^{(t+1)}A^{(t+1)})\right]\le& \E\left[f(B^{(t)}A^{(t)})\right]+K \eta \epsilon^2\sigma^2+ b^2(\sigma^2+c^2)\nonumber\\
+&\eta^2K\left(\frac{(2K+1)L}{2}\tau a^2c^2+\frac{c^2KL}{2}(a^4+b^4)+K^2\frac{b^2c^2}{2}\right)\nonumber\\
-&\left(\frac{\eta}{4}-\frac{\eta^2L}{2}\tau\right)K\E\left[\|\nabla_B f(W^{(t-\tau)})\|^2\right].
\end{align}

In \eqref{eq:DLA} and \eqref{eq:DLB}, we decompose $f(B^{(t+1)}A^{(t+1)})$ using the intermediate terms $f(B^{(t)}A^{(t+1)})$ and $f(B^{(t)}A^{(t)})$. By a similar analysis, we obtain:
\begin{align}\label{eq:term1mixingmain2}
\E\left[f(B^{(t+1)}A^{(t+1)})\right]\le& \E\left[f(B^{(t)}A^{(t)})\right]+K\eta\epsilon^2\sigma^2+a^2(\sigma^2+c^2)\nonumber\\
+&\eta^2K\left(\frac{(2K+1)L}{2}\tau b^2c^2+\frac{c^2KL}{2}(a^4+b^4)+K^2\frac{a^2c^2}{2}\right)\nonumber\\
-&\left(\frac{\eta}{4}-\frac{\eta^2L}{2}\tau\right)K\E\left[\|\nabla_A f(W^{(t-\tau)})\|^2\right].
\end{align}
Summing over \eqref{eq:term1mixingmain1} and \eqref{eq:term1mixingmain2}, we derive:
\begin{align}\label{eq:term1mixingmain3}
\E\left[f(B^{(t+1)}A^{(t+1)})\right]\le& \E\left[f(B^{(t)}A^{(t)})\right]+K\eta \epsilon^2\sigma^2+\frac{a^2+b^2}{2}(\sigma^2+c^2)\nonumber\\
+&\eta^2K\left(\frac{(2K+1)(a^2+b^2)L}{4}\tau c^2+\frac{c^2KL}{2}(a^4+b^4)+K^2\frac{c^2(a^2+b^2)}{4}\right)\nonumber\\
-&\left(\frac{\eta}{4}-\frac{\eta^2L}{2}\tau\right)K\left(\E\left[\|\nabla_A f(W^{(t-\tau)})\|^2\right]+\E\left[\|\nabla_B f(W^{(t-\tau)})\|^2\right]\right).
\end{align}

\subsection{Telescoping} 
Let $T\ge \tau^2$, $\eta\le\frac{1}{4L\sqrt{T}}$, we have
\begin{align*}
\frac{\eta}{8}\le\frac{\eta}{4}-\frac{\eta^2L}{2}\tau.
\end{align*}
Then, summing for $\tau\le t<T+\tau$, by telescoping \eqref{eq:term1mixingmain3}, we obtain:
\begin{align}\label{eq:term1mixingmain4}
&\frac{1}{T}\sum_{t=0}^{T-1}\left(\E\left[\|\nabla_A f(W^{(t)})\|^2\right]+\E\left[\|\nabla_B f(W^{(t)})\|^2\right]\right)\nonumber\\
&\le \epsilon^2\sigma^2 +\eta\Gamma+\frac{8\E\left[f(W^{(\tau)})\right]+4(a^2+b^2)(\sigma^2+c^2)/K}{\eta T},
\end{align}
where 
\begin{align*}
\Gamma = &2(2K+1)(a^2+b^2)L\tau c^2+4c^2KL(a^4+b^4)+2K^2c^2(a^2+b^2).
\end{align*}
Now, we upper bound $\E\left[f(W^{(\tau)})\right]$. According to Lemma~\ref{lem:fsmooth}, $f$ is $L$-smooth. 
By the Descent Lemma, we have:
\begin{align*}
f(W^{(\tau)}) - f(W^{(\tau-1)}) \le \langle \nabla f(W^{(\tau-1)}), W^{(\tau)}-W^{(\tau-1)}\rangle+\frac{L}{2}\|W^{(\tau)}-W^{(\tau-1)}\|^2.
\end{align*}
By Cauchy–Schwarz inequality, 
\begin{align*}
f(W^{(\tau)}) - f(W^{(\tau-1)}) \le  \left(\|\nabla f(W^{(\tau-1)})\| +\frac{L}{2}\|W^{(\tau)}-W^{(\tau-1)}\|\right)\|W^{(\tau)}-W^{(\tau-1)}\|,
\end{align*}
which implies
\begin{align*}
\E\left[f(W^{(\tau)})\right] - \E\left[f(W^{(\tau-1)})\right] \le  \left(\sqrt{\E\left[\|\nabla f(W^{(\tau-1)})\|^2\right]} +\frac{L}{2}\|W^{(\tau)}-W^{(\tau-1)}\|\right)\|W^{(\tau)}-W^{(\tau-1)}\|.
\end{align*}
Based on \eqref{eq:BoundsGW} in Lemma~\ref{lem:Gradientf} and \eqref{eq:DisW} in Lemma~\ref{lem:BoundDiffW}, we have:
\begin{align*}
\E\left[f(W^{(\tau)})\right] - \E\left[f(W^{(\tau-1)})\right] \le & 3\frac{L}{2}K^2(\eta^2 b^4c^2 + \eta^2 a^4c^2 + K^2 \eta^4 a^2b^2c^4)\\
+&3\sqrt{2\sigma^2+2c^2}K\sqrt{\eta^2 b^4c^2 + \eta^2 a^4c^2 + K^2 \eta^4 a^2b^2c^4}.
\end{align*}
By applying the inequality above repeatedly, we obtain:
\begin{align*}
\E\left[f(W^{(\tau)})\right]\le& \E\left[f(W^{(0)})\right]+3\tau\frac{L}{2}K^2(\eta^2 b^4c^2 + \eta^2 a^4c^2 + K^2 \eta^4 a^2b^2c^4)\\
+&3\tau\sqrt{2\sigma^2+2c^2}K\sqrt{\eta^2 b^4c^2 + \eta^2 a^4c^2 + K^2 \eta^4 a^2b^2c^4},
\end{align*}
which implies $\E\left[f(W^{(\tau)})\right]$ is bounded.

Finally, let $\epsilon<\frac{1}{2}$, $\tau\ge\tau_{\mix}(\pi_{\min}\epsilon)$, $T\ge \tau^2$, $\eta\le\frac{1}{4L\sqrt{T}}$, we obtain:
\begin{align*}
\frac{1}{T}\sum_{t=1}^{T}\left(\E\left[\|\nabla_{A} f(W^{(t)})\|^2\right]+\E\left[\|\nabla_{B} f(W^{(t)})\|^2\right] \right) \le \tilde{\cO}\left(\epsilon^2 + \frac{L\tau}{\sqrt{T}}+\frac{L\tau}{T}\right).
\end{align*}
\end{proof}

\end{document}